\documentclass{article}

\usepackage[preprint]{log_2026}

\usepackage{amsmath,amssymb,amsfonts,amsthm}
\usepackage{booktabs}
\usepackage{multirow}
\usepackage{graphicx}
\usepackage{microtype}
\usepackage{enumitem}
\usepackage{xcolor}
\usepackage{tikz}
\usetikzlibrary{arrows.meta,positioning,fit,calc,shapes.geometric}
\usepackage[numbers,compress,sort]{natbib}

\newtheorem{theorem}{Theorem}[section]
\newtheorem{proposition}[theorem]{Proposition}
\newtheorem{lemma}[theorem]{Lemma}
\newtheorem{corollary}[theorem]{Corollary}
\newtheorem{definition}[theorem]{Definition}
\newtheorem{remark}[theorem]{Remark}

\newcommand{\R}{\mathbb{R}}
\newcommand{\calS}{\mathcal{S}}
\newcommand{\calT}{\mathcal{T}}
\newcommand{\calC}{\mathcal{C}}
\newcommand{\calA}{\mathcal{A}}
\newcommand{\Colors}{\Sigma}
\newcommand{\Att}{\operatorname{Att}}
\newcommand{\softmax}{\operatorname{softmax}}

\newcommand{\MLP}{\operatorname{MLP}}

\definecolor{sourceblue}{RGB}{70,130,180}
\definecolor{targetorange}{RGB}{230,135,60}
\definecolor{slotgreen}{RGB}{70,155,105}
\definecolor{anchorpurple}{RGB}{130,90,170}
\definecolor{lightgray}{RGB}{235,235,235}
\definecolor{accentred}{RGB}{200,70,60}

\title[Designing a Good Virtual Node]{Designing a Good Virtual Node:\\
Addressable and Cardinality-Preserving Global Memory for Message Passing Architectures}

\author[F\'elix Marcoccia]{%
F\'elix Marcoccia\\
}

\begin{document}
\maketitle

\begin{abstract}
Virtual nodes give message-passing neural networks a simple global
communication route, but the standard node--VN--node pipeline compresses the
graph into one homogeneous state and broadcasts it identically to every node.
Building on the Two-Radius analysis of Mishayev et
al.~\cite{mishayev2025short}, we ask how auxiliary virtual memory can relieve
this finite-capacity bottleneck without self-attention. We identify
two requirements. First, the global memory should be factorized into
independently writable and readable states: this can be achieved using addressable cross-attention slots. Second, addressability alone does not preserve multiplicity, because
softmax attention is invariant to uniform replication. Inserting each slot
query as a private key/value anchor recovers the discarded normalization mass
and yields, on bounded color domains, an injective multiset representation
able to implement a 1-WL refinement. Experiments on multiplicity-aware
Two-Radius, motif counting, and constrained link-set prediction support this
addressable and cardinality-preserving virtual memory at \(O(nMd)\) arithmetic
cost.
\end{abstract}

\section{Introduction}
\label{sec:intro}

Message-passing neural networks (MPNNs) remain a natural default for learning on graphs.  Their computation follows observed edges, scales with graph sparsity, and encodes the assumption that local relations should be processed locally \cite{gilmer2017neural}.  The same locality can create a communication bottleneck: many distinct signals may have to cross a small number of intermediate node representations.  This phenomenon is usually discussed as \emph{oversquashing} \cite{alon2021bottleneck,topping2022understanding,di2023over}.

Most standard examples combine a large graph radius with a rapidly expanding receptive field.  The Two-Radius construction of Mishayev et al.~\cite{mishayev2025short} separates these effects.  It has $n$ sources, $n$ targets, and one or several central nodes.  Every source is only two hops from every target, yet each target must recover the label carried by the source with the same identifier.  The task therefore requires a global table of $n$ associations to pass through the central representation.  Accuracy deteriorates with $n$ even though depth, distance, and node-to-node Jacobian path length remain constant.  This is short-range oversquashing: the difficulty is a finite-capacity channel, not merely long-distance attenuation. The central nodes are part of the input graph, whereas a virtual node is an auxiliary state introduced by the architecture. The two are distinct, but both act as finite-dimensional channels across the same source--target cut.

A conventional virtual node (VN) \cite{gilmer2017neural,ishiguro2019graph}
does not fundamentally change this setting.  It shortens graph paths and
improves mixing in long-range tasks, an effect characterized spectrally in
\cite{southern2025virtual,hwang2022analysis}, but Two-Radius already has radius
two.  More importantly, the standard node--VN--node operation forms one
homogeneous summary and broadcasts the same global message to every target.
Several cloned VNs also remain identical under shared initialization,
connectivity, and updates.  The empirical study in \cite{mishayev2025short}
therefore finds only a modest improvement.

Increasing the width of this state increases its raw information capacity, so
a sufficiently wide VN is not intrinsically incapable of storing the complete
table. A fully additive broadcast, however, contributes the same global term
to every target and cannot realize arbitrary target-specific lookup.
Exploiting a wide VN therefore requires a joint non-separable decoder that
uses the target representation to partition and select its feature
coordinates. Generic nonlinear decoders can synthesize this operation in
principle, but they must discover both the hidden factorization and its gating
implicitly. We instead expose the factorization as an architectural memory
axis with separately routed writes and reads.

Dense self-attention is an effective escape route: every target can interact directly with every source.  It is nevertheless a strong architectural intervention.  It introduces quadratic all-pairs interactions, and in hybrid graph Transformers it can partially replace rather than simply support edge-based message passing \cite{vaswani2017attention,rampasek2022gps}.  This is not always undesirable, but it makes the answer to the bottleneck problem depend on a different global computation primitive.

This paper considers a narrower question:
\begin{quote}
\itshape
What properties should a virtual node have in order to provide useful global
communication while retaining an MPNN as the primary graph feature aggregator?
\end{quote}

We identify two expressivity requirements and one practical integration principle.

\paragraph{Addressability.}
A useful global memory should expose several independently queryable states
rather than one homogeneous summary.  We represent these states as \(M\)
latent virtual nodes, or \emph{slots}.  Slots aggregate graph information by
cross-attention and are queried by graph nodes through a second
cross-attention.  The mechanism is related to induced set attention and latent
arrays \cite{lee2019set,jaegle2021perceiver}; RANGE similarly relays graph
information through attention nodes with positional encodings
\cite{caruso2026range}.  Our focus is the factorization itself: which capacity
is gained over a homogeneous broadcast, which decoder interaction is required
for target-specific lookup, and how compact dot-product addresses realize the
write--read partition.

\paragraph{Multiplicity preservation.}
A slot may be addressable and still discard absolute counts.  Softmax attention produces a normalized weighted mean and is unchanged when every key/value is repeated the same number of times.  This limitation is the attention analogue of the gap between mean and injective sum aggregation in WL-style expressivity analyses \cite{xu2019powerful,corso2020pna,zhang2019cardinality}.  We introduce a counted Two-Radius task in which labels and their multiplicities must both be recovered.

A practical broadcast should also preserve the local representation while the global route is being learned.  We use a short identity- or near-identity-initialized Slot-FiLM update for this purpose.

The resulting analysis provides four contributions: (i) an effective
factorization bound and an invariant--equivariant construction with per-slot
width \(O((n/M)\log n/b)\); (ii) a characterization of homogeneous broadcast,
implicit target-conditioned addressing, and compact dot-product routing,
including an \(M\)-versus-\(O(\log M)\) address-dimension comparison; (iii) an
anchored read that restores the normalization mass discarded by softmax and
recovers cardinality-sensitive, 1-WL-style aggregation; and (iv) controlled
benchmarks for address retrieval, multiplicity recovery, motif counting, and
constrained link generation.

Figure~\ref{fig:overview} separates the structural Two-Radius bottleneck from the auxiliary virtual memory and contrasts a homogeneous VN with an addressable Cross-Attn VN.

\begin{figure*}[t]
\centering
\resizebox{0.72\textwidth}{!}{%
\begin{tikzpicture}[
  font=\small,
  real/.style={circle,draw=black!55,line width=.5pt,minimum size=4.8mm,inner sep=0pt},
  src/.style={real,fill=sourceblue!25},
  tgt/.style={real,fill=targetorange!30},
  central/.style={circle,draw=black!75,fill=black!10,minimum size=8.2mm,inner sep=0pt,line width=.7pt},
  vn/.style={circle,draw=slotgreen!65!black,fill=slotgreen!13,minimum size=8.5mm,inner sep=0pt,line width=.7pt,dashed},
  slot/.style={rectangle,rounded corners=2.5pt,draw=slotgreen!65!black,fill=slotgreen!16,minimum width=9.5mm,minimum height=5.4mm,line width=.6pt},
  graph/.style={-{Latex[length=1.7mm]},line width=.48pt,black!55},
  global/.style={-{Latex[length=1.7mm]},line width=.48pt,slotgreen!65!black,dashed},
  faint/.style={-{Latex[length=1.5mm]},line width=.4pt,black!20},
  hot/.style={-{Latex[length=2mm]},line width=1pt,accentred},
  coltitle/.style={font=\scriptsize\itshape,text=black!55},
  ptitle/.style={font=\bfseries\small}
]
\begin{scope}
  \node[ptitle] at (2.0,2.55) {(a) VN};
  \node[coltitle] at (0,1.95) {sources};
  \node[coltitle] at (4.0,1.95) {targets};
  \foreach \y [count=\k] in {1.45,0.85,0.25,-0.95}{\node[src] (as\k) at (0,\y) {};}
  \node at (0,-0.35) {$\vdots$};
  \foreach \y [count=\k] in {1.45,0.85,0.25,-0.95}{\node[tgt] (at\k) at (4.0,\y) {};}
  \node at (4.0,-0.35) {$\vdots$};
  \node[vn] (avn) at (2.0,1.10) {\scriptsize VN};
  \node[central] (ac) at (2.0,-0.60) {$c$};
  \foreach \k in {1,2,3,4}{
    \draw[graph] (as\k) -- (ac);
    \draw[graph] (ac) -- (at\k);
    \draw[global] (as\k) -- (avn);
    \draw[global] (avn) -- (at\k);
  }
\end{scope}
\begin{scope}[xshift=6.1cm]
  \node[ptitle] at (2.0,2.55) {(b) Cross-Attn VN};
  \node[coltitle] at (0,1.95) {sources};
  \node[coltitle] at (4.0,1.95) {targets};
  \foreach \y [count=\k] in {1.45,0.85,0.25,-0.95}{\node[src] (bs\k) at (0,\y) {};}
  \node at (0,-0.35) {$\vdots$};
  \foreach \y [count=\k] in {1.45,0.85,0.25,-0.95}{\node[tgt] (bt\k) at (4.0,\y) {};}
  \node at (4.0,-0.35) {$\vdots$};
  \node[slot] (z1) at (2.0,1.40) {$z_1$};
  \node[slot] (z2) at (2.0,0.68) {$z_2$};
  \node at (2.0,0.15) {$\vdots$};
  \node[slot] (zM) at (2.0,-0.37) {$z_M$};
  \node[central] (bc) at (2.0,-1.20) {$c$};
  \foreach \k in {1,2,3,4}{
    \draw[graph] (bs\k) -- (bc);
    \draw[graph] (bc) -- (bt\k);
    \foreach \m in {1,2,M}{
      \draw[faint] (bs\k) -- (z\m);
      \draw[faint] (z\m) -- (bt\k);
    }
  }
  \draw[hot] (bs2) -- (z2);
  \draw[hot] (z2) -- (bt2);
\end{scope}
\end{tikzpicture}}
\caption{Structural and auxiliary communication in Two-Radius. \textbf{(a)}
The real central node \(c\) remains the graph bottleneck, while a conventional
VN adds one homogeneous auxiliary state broadcast to every target.
\textbf{(b)} A Cross-Attn VN keeps the original graph path unchanged and adds
\(M\) separately queryable slots; the highlighted route illustrates a
source-specific write followed by a target-specific read.}
\label{fig:overview}
\end{figure*}

\section{Background and Design Objective}
\label{sec:background}

\subsection{Message passing and the Two-Radius task}

An MPNN layer updates node $v$ as
\begin{align}
m_v^{(\ell)} &=
\bigoplus_{u\in N(v)}
\psi_\ell\!\left(h_v^{(\ell)},h_u^{(\ell)},e_{uv}\right),\\
h_v^{(\ell+1)} &=
\phi_\ell\!\left(h_v^{(\ell)},m_v^{(\ell)}\right),
\end{align}
where $\bigoplus$ is permutation invariant.  Depending on the choice of aggregation and update, this model class is at most as discriminative as 1-WL on unlabeled graphs \cite{xu2019powerful,morris2019weisfeiler}.

In permutation-valued Two-Radius, the graph contains sources
$\calS=\{s_1,\ldots,s_n\}$, targets
$\calT=\{t_1,\ldots,t_n\}$, and a nonempty central set $\calC$.  Every central node is adjacent to every source and target, with no direct source--target edges.  Source $s_i$ carries identifier $i$ and label $\pi(i)$, where $\pi$ is a permutation of $[n]$.  Target $t_j$ carries identifier $j$ and must output
\begin{equation}
    y_{t_j}=\pi(j).
\end{equation}
All useful information can arrive in two rounds, yet the intermediate state must encode one of $n!$ assignments.

\paragraph{Structural bottleneck versus auxiliary memory.}
The nodes in \(\mathcal C\) are real nodes of the Two-Radius input graph and
form its structural bottleneck.  A VN is an auxiliary state introduced by the
architecture.  These objects are distinct, although both transmit
finite-dimensional summaries across the source--target cut.  A conventional
VN adds another homogeneous summary; it does not replace the central nodes or
factorize their information.  The proposed slots add a parallel virtual route
whose states are explicitly distinguished and separately queried.  The local
MPNN continues to process the original graph in every model.

The original analysis establishes a width requirement for fixed-precision MPNNs and shows that simply adding standard VNs does not convincingly resolve the empirical bottleneck \cite{mishayev2025short}.  We retain this task because it isolates global communication without confounding it with deep propagation.

\subsection{Why not simply use global self-attention?}

A graph Transformer can create a direct path between each source and each target.  This changes the communication graph from sparse to complete, uses $O(n^2)$ node--node attention pairs, and has $O(n^2d)$ arithmetic cost.  Sparse and linearized variants can reduce this cost, but they still introduce a global node-to-node processing path that may become the dominant computation \cite{rampasek2022gps,shirzad2023exphormer}.  In the uniform-expressivity setting, moreover, self-attention and VN-augmented message passing are in general incomparable \cite{rosenbluth2024uniform}, so replacing one primitive by the other is not a strict upgrade.  

We study the complementary regime in which edge-based message passing remains unchanged, global computation is restricted to $M$ latent states, and nodes interact globally only through these states.  One local layer followed by a bidirectional node--slot block costs $O(|E|d+nMd)$.

\subsection{Design criteria for a good VN}

A single global vector can be expressive on bounded graphs when width and precision are unconstrained, and MPNN+VN can even approximate attention under suitable non-uniform constructions \cite{cai2023connection}.  Our concern is a practical finite-width channel.  The relevant obstruction is the classical Deep Sets bottleneck: sum-decomposable multiset encoders require a latent dimension that grows with the multiset size to remain injective \cite{zaheer2017deepsets,wagstaff2019limitations}, and finite communication capacity bounds what constant-width states can transmit across a graph cut \cite{loukas2020depth}.  We use the following criteria.

\begin{definition}[Addressable global memory]
A collection of virtual states is addressable when different input queries can select different states, and the states are allowed to evolve differently under permutation-equivariant computation.
\end{definition}

\begin{definition}[Multiplicity-preserving read]
A source-to-memory aggregation is multiplicity preserving on a task family when relevant changes in the multiplicities of indistinguishable inputs remain recoverable from its output.
\end{definition}

The first criterion controls \emph{where} information is stored.  The second controls \emph{what} survives aggregation.  Their combination separates routing capacity from multiset fidelity.

\section{Addressable Virtual Nodes}
\label{sec:addressable}

\subsection{Architecture}

Let \(H\in\R^{|V|\times d}\) denote the states of all real graph nodes,
including the structural central nodes, after a local MPNN block. The local
MPNN operates only on the original graph. Let
\(S^{(0)}\in\R^{M\times d}\) be \(M\) distinct learned slot states. For a
single attention head, the node-to-slot write is
\begin{align}
Q_S &= S^{(0)}W_Q,\qquad
K_X=HW_K,\qquad
V_X=HW_V,\\
A^{\mathrm{in}}
&=
\softmax_{\mathrm{nodes}}\!\left(
\frac{Q_SK_X^\top}{\tau_{\mathrm{in}}\sqrt d}
\right),\\
Z
&=
S^{(0)}+\left(A^{\mathrm{in}}V_X\right)W_O.
\label{eq:slotread}
\end{align}
The last line is the standard cross-attention residual: it retains the learned
slot identity while adding data-dependent graph content. It does not keep
address and content in formally disjoint subspaces.

In the controlled Two-Radius experiments, the write attends only to source
states, so \(H\) is replaced by \(H_{\cal S}\) in \(K_X\) and \(V_X\).
Targets and structural central nodes are excluded from this attention set to
isolate the source--memory--target channel. In the multiplicity experiment,
this also prevents fixed, non-replicated nodes from acting as implicit anchors
inside the softmax normalization.

Targets then query the updated slots:
\begin{align}
Q_T &= H_TU_Q,\qquad
K_Z=ZU_K,\qquad
V_Z=ZU_V,\\
A^{\mathrm{out}}
&=
\softmax_{\mathrm{slots}}\!\left(
\frac{Q_TK_Z^\top}{\tau_{\mathrm{out}}\sqrt d}
\right),\\
O_T &= A^{\mathrm{out}}V_Z.
\label{eq:slotreadout}
\end{align}
The block is permutation invariant in source order and equivariant in target
order. This follows from the usual cancellation between a column permutation
of the attention weights and the same row permutation of the values.

At this stage, the residual \(S^{(0)}\) lies outside the attention
normalization. The block is therefore addressable but remains invariant to
uniform replication of its source keys and values. Section~\ref{sec:counting}
adds a private slot-derived key/value \emph{inside} the attention set to recover
the missing normalization mass.

The architecture is close to induced set attention
\cite{lee2019set} and latent-array attention
\cite{jaegle2021perceiver}. The distinction emphasized here is semantic: the
latent states are treated as virtual graph nodes whose purpose is to supplement
the MPNN's global communication. There is no real-node self-attention.

\subsection{Capacity and effective factorization}

A width-\(D\) bottleneck provides \(Db\) bits of raw finite-precision storage,
but this capacity is not automatically exposed as \(D\) independently usable
memory locations. In a fully additive broadcast, meaning a separable prediction
of the form
\[
\widehat y_j=f(h_j)+g(z),
\]
the same global contribution is delivered to every target, so arbitrary
target-specific retrieval cannot be realized. This statement does not cover a
message that is merely added before a joint nonlinear map: nonlinear decoders
can in principle create a non-separable interaction between \(h_j\) and \(z\).
When they succeed, however, the encoder must separate contents across feature
subspaces and the decoder must learn which subspace to select from the target
representation. The factorization and its gating are then implicit in the
feature coordinates, without an architectural mechanism that directly
supports them. We call this organization a factorization of the bottleneck. A
conventional broadcast leaves it implicit, whereas addressable slots expose it
directly as a memory axis with separately routed writes and reads.

The relevant quantity is therefore not the number of physical virtual nodes,
but the number of states that can carry different contents and be used
separately by the target decoder.

\begin{definition}[Effective factorization degree]
\label{def:effective-factorization}
A source--target transcript has effective factorization degree
\(L_{\mathrm{eff}}\) if all source-dependent information available to the
targets is represented by
\[
T(X_{\cal S})=(u_1,\ldots,u_{L_{\mathrm{eff}}})
\in\calA^{L_{\mathrm{eff}}\times d},
\qquad |\calA|\leq 2^b,
\]
where the \(u_\ell\) may vary independently on the task family and are
separately usable by the decoder.  Physical copies constrained to be identical
count as one effective state.
\end{definition}

\begin{remark}[Operational status of Definition~\ref{def:effective-factorization}]
\label{rem:operational}
``Separately usable'' is deliberately an operational notion rather than a
purely syntactic one.  It is exact in the two cases that matter for our
argument.  First, identically initialized clones with shared equivariant
updates provably remain equal at every layer
(Appendix~\ref{app:capacity}), hence contribute exactly one effective state.
Second, the addressable construction of Theorem~\ref{thm:upper} supplies an
explicit write and read routing, hence attains \(L_{\mathrm{eff}}=M+O(1)\) by
construction.  Intermediate cases---for instance independently initialized
VN clones without any selection mechanism---escape the exact symmetry
obstruction but provide no interface through which a target could reliably
select a specific state; we therefore regard the read interface, not the
initialization, as the determining factor, and treat
Definition~\ref{def:effective-factorization} as a design criterion rather than
a measurable property of an arbitrary trained network.
\end{remark}

The fixed structural path of Two-Radius contributes only a constant number of
such states.  A conventional VN adds one homogeneous state.  Likewise, \(M\)
cloned VNs with identical initialization, neighborhoods, and shared updates
remain equal and do not produce an \(M\)-fold factorization.  Addressable slots
are designed precisely to make the \(M\) auxiliary states distinguishable,
separately writable, and separately readable.

\begin{theorem}[Effective finite-capacity requirement]
\label{thm:lower}
Any deterministic architecture whose complete source-dependent transcript has
effective factorization degree \(L_{\mathrm{eff}}\), width \(d\), and
\(b\)-bit coordinates, and that solves all permutation-valued Two-Radius
instances exactly, satisfies
\begin{equation}
L_{\mathrm{eff}}db\geq\log_2(n!).
\label{eq:capacity}
\end{equation}
Consequently,
\[
d=\Omega\!\left(
\frac{n\log n}{L_{\mathrm{eff}}b}
\right).
\]
\end{theorem}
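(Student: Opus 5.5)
The argument is a counting (pigeonhole) argument on the transcript. Fix \(n\) and fix every target-side input: target identifiers, the central nodes, and whatever the targets see apart from source-dependent information. By Definition~\ref{def:effective-factorization}, the only source-dependent information available to the targets is the transcript
\[
T(X_{\calS})=(u_1,\ldots,u_{L_{\mathrm{eff}}})\in\calA^{L_{\mathrm{eff}}\times d}.
\]
Since the architecture is deterministic, each target output \(\widehat y_{t_j}\) is a function of the fixed target-side data and of \(T(X_{\calS})\) alone. Hence the full output vector \((\widehat y_{t_1},\ldots,\widehat y_{t_n})\) factors as \(F\circ T\) for some map \(F\).

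First I would use exact correctness. For every permutation \(\pi\) of \([n]\), the output must equal \((\pi(1),\ldots,\pi(n))\), and this tuple determines \(\pi\). So \(\pi\mapsto F(T(X_{\calS}^{\pi}))\) is injective on the symmetric group, which forces \(\pi\mapsto T(X_{\calS}^{\pi})\) to be injective as well. Second, I count the codomain. Each coordinate lies in \(\calA\) with \(|\calA|\le 2^b\), so
\[
|\calA^{L_{\mathrm{eff}}\times d}|\le 2^{L_{\mathrm{eff}}db}.
\]
Injectivity then gives \(n!\le 2^{L_{\mathrm{eff}}db}\), which is \eqref{eq:capacity} after taking \(\log_2\). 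Third, I apply Stirling's bound \(\log_2(n!)\ge n\log_2 n-n\log_2 e=\Omega(n\log n)\) and divide by \(L_{\mathrm{eff}}b\) to get \(d=\Omega\!\left(n\log n/(L_{\mathrm{eff}}b)\right)\).

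The main obstacle is stating precisely that ``all source-dependent information available to the targets is represented by \(T\)''. Concretely, the targets' computation must factor through \(T\) together with source-independent data, with no side channel such as the structural central nodes carrying extra source information outside the transcript. I would handle this by reading the hypothesis ``complete source-dependent transcript'' as including every state on the source--target cut: the central-node states, any VN states, and any slot states. By the discussion above, the structural path then contributes \(O(1)\) of the \(L_{\mathrm{eff}}\) states. I would also note the role of physical copies constrained to be identical. They do not enlarge the image of \(T\), because the map \(\pi\mapsto T\) factors through their common value, so counting them once loses nothing; Appendix~\ref{app:capacity} gives the symmetry fact needed for that case. With this factorization fixed, the rest is the routine counting step.
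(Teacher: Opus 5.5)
Your proposal is correct and follows the paper's proof. Both argue that exact deterministic recovery makes \(\pi\mapsto T_\pi\) injective, bound the number of possible transcripts by \(2^{L_{\mathrm{eff}}db}\), and apply Stirling to get the asymptotic form. Your extra care about which states belong to the transcript (central-node states, identical clones) makes explicit what the paper takes as given through Definition~\ref{def:effective-factorization}.
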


\begin{remark}[Scope of Theorem~\ref{thm:lower}]
\label{rem:lower-scope}
The bound is a worst-case, exact-recovery statement for deterministic
architectures: it applies to any model that must output the correct
permutation on \emph{every} instance, and it is silent about approximate or
average-case recovery, for which a rate--distortion formulation would be the
natural replacement.  It is also an information-counting argument: it does
not assume anything about the architecture beyond the finiteness of its
transcript, and conversely it cannot by itself guarantee that a given
architecture \emph{exposes} its raw capacity to the decoder---that gap is
precisely what Definition~\ref{def:effective-factorization} and the
addressable construction are meant to capture.
\end{remark}

The proof is the usual injectivity argument: different permutations require
different complete transcripts.  In the baseline architecture
\(L_{\mathrm{eff}}=O(1)\), so the required width remains
\(\Omega(n\log n/b)\).  The construction below realizes \(M\) separately usable auxiliary states,
so \(L_{\mathrm{eff}}=M+O(1)\) after including the fixed structural route.
This reduces the required width of each state by a factor \(M\), up to that
constant structural contribution, while leaving the total information
requirement unchanged.

A single VN of width \(D=Md\) has the same raw finite-state capacity as \(M\)
slots of width \(d\).  The distinction is therefore not additional bits at
fixed total width, but whether the required factorization is hidden in feature
coordinates or exposed as a memory axis.

\begin{theorem}[Constructive addressable upper bound]
\label{thm:upper}
Let \(M\leq n\) and assume each coordinate stores at most \(b\) bits.  There
exists a permutation-invariant encoder with \(M\) addressable slots and a
permutation-equivariant target decoder that solves permutation-valued
Two-Radius using
\begin{equation}
d\leq
\left\lceil\frac{n}{M}\right\rceil
\left\lceil\frac{\log_2 n}{b}\right\rceil
+
\left\lceil\frac{\log_2 M}{b}\right\rceil
+O(1)
\label{eq:upper}
\end{equation}
coordinates per slot.
\end{theorem}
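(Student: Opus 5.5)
The construction partitions the identifier set \([n]\) into \(M\) contiguous blocks \(B_1,\ldots,B_M\), each of size at most \(\lceil n/M\rceil\). Block \(m\) is assigned to slot \(m\): slot \(m\) stores the labels \(\pi(i)\) for \(i\in B_m\), and target \(t_j\) reads from slot \(m(j)\), the unique block containing \(j\). This gives the three terms of \eqref{eq:upper}.
\begin{itemize}
\item The table part of each slot has \(\lceil n/M\rceil\) fields, one per position within the block.
\item Each field holds a label in \([n]\) written with \(\lceil \log_2 n/b\rceil\) coordinates of \(b\) bits each.
\item An address field of \(\lceil\log_2 M/b\rceil\) coordinates encodes the slot index \(m\).
\item The remaining \(O(1)\) coordinates are constant and bias channels used by the attention and the decoder.
\end{itemize}
The learned initial states \(S^{(0)}_m\) carry the address \(m\) in the address field and zeros in the table part. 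This makes them distinct, as the architecture requires.

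\textbf{Write step.} Each source state encodes three things: its block index \(m(i)\), its offset \(r(i)\) within the block, and its label \(\pi(i)\). Take the keys to be the block index and the queries \(Q_S\) to be the slot address. With a sharp temperature \(\tau_{\mathrm{in}}\to 0\), or a large enough scale, slot \(m\) then attends uniformly over the sources in \(B_m\) and gives zero weight to all others. The value map \(W_V\) places the label \(\pi(i)\), multiplied by \(|B_m|\), into the field indexed by the offset \(r(i)\). Since slot \(m\) averages uniformly over exactly the \(|B_m|\) sources of its block, the average deposits \(\pi(i)\) exactly in field \(r(i)\). The block sizes are fixed and known, and equal \(\lceil n/M\rceil\) except possibly the last, so the rescaling constant can be absorbed per block.

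\textbf{Read step.} Target \(t_j\) forms the query \(m(j)\) and selects slot \(m(j)\) by sharp attention. The decoder MLP, which is equivariant because it is applied row-wise, then uses the offset \(r(j)\) carried in \(h_{t_j}\) to pick field \(r(j)\) and outputs \(\pi(j)\).

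\textbf{Symmetry.} Permutation invariance in source order and equivariance in target order follow from the block's general property stated after \eqref{eq:slotreadout}.

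\textbf{Main obstacle: exact selection at finite precision.} Softmax never reaches exactly zero weight, and fields written from a linear offset \(r(i)\) must not overlap. I would handle this in three ways.
\begin{itemize}
\item Build the offset placement into the value map by one-hot encoding the offset in the source features. The local MPNN, or the input encoding, can provide the one-hot together with the block index, since \(m(i)\) and \(r(i)\) are deterministic functions of \(i\).
\item Compute the scores as \(-\|q-k\|^2\)-style dot products over one-hot or \(\pm1\) address codes of length \(O(\log M)\). The score gap is then at least a constant, and scaling by a large factor makes the off-block mass smaller than half the quantization step \(2^{-b}\).
\item Finish with rounding in the decoder MLP, which recovers the exact integers.
\end{itemize}
If compact address codes cause trouble, I would fall back to one-hot addresses only inside the score computation in \(W_Q,W_K\). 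Those are projection matrices, not slot coordinates, so the per-slot width stays within \eqref{eq:upper}.
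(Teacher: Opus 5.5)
Your proposal is correct in its core and is essentially the paper's proof: the same balanced public partition, the same slot layout (an address field plus \(\lceil n/M\rceil\) label blocks of \(\lceil\log_2 n/b\rceil\) coordinates), the same identifier-determined write into block \(r(i)\) of slot \(g(i)\) and read of block \(r(j)\) from slot \(g(j)\), and the same argument for source invariance and target equivariance. The paper stops at this layout level, so your softmax realization (rescaling, sharp temperature, rounding) is extra; within it, drop the one-hot fallback, since by Proposition~\ref{prop:onehot} the linear maps \(W_Q,U_K\) cannot send \(M\) slot states of width \(d<M\) to \(M\) one-hot addresses, and instead score packed integer codes by \(q^\top k-\tfrac12\|k\|^2\) (constant coordinate on the query side, norm coordinate on the key side), which keeps the address at \(\lceil\log_2 M/b\rceil+O(1)\) coordinates rather than the \(\lceil\log_2 M\rceil\) that unpacked \(\pm1\) codes cost.
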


\paragraph{Construction.}
Partition the public identifier set into balanced groups
\(I_1,\ldots,I_M\).  Let \(g(i)\) denote the group of identifier \(i\) and
\(r(i)\) its position inside that group.  Slot \(m\) stores a fixed address
code and one label-code block for each identifier in \(I_m\).  Source \(i\)
writes the code of \(\pi(i)\) into block \(r(i)\) of slot \(g(i)\); target
\(j\) queries slot \(g(j)\) and reads block \(r(j)\).  The write is invariant
to source order because destinations depend only on identifiers, and the read
is equivariant because all targets apply the same identifier-conditioned
decoder.  A full proof appears in Appendix~\ref{app:capacity}.

The construction uses \(Md=O(n\log n/b)\) total auxiliary width.  A wide VN of
that total width could represent the same table, but its decoder would have to
discover an equivalent decomposition of the coordinates and a target-specific
selection rule.

\subsection{From homogeneous broadcast to explicit addressing}

A standard VN sends the same global state \(z\) to every target.  This alone
does not prohibit lookup, but the decoder must make the global contribution
depend on the target representation.

\begin{proposition}[Limitation of separable broadcast]
\label{prop:separable}
Let the exact target output be represented by a label vector and suppose
\begin{equation}
\widehat y_j=f(h_j)+g(z),
\label{eq:separable-broadcast}
\end{equation}
where \(z\) is broadcast identically, the target states \(h_j\) depend only
on their fixed identifiers, and no other path carries source information to
the targets.  For \(n\geq2\), this decoder cannot realize every
permutation-valued Two-Radius instance.
\end{proposition}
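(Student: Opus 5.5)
We argue by contradiction, using the additive structure of \eqref{eq:separable-broadcast}. Suppose a single pair \((f,g)\) realizes every permutation-valued Two-Radius instance on \(n\geq 2\) sources and targets. Write \(e_k\) for the label vector encoding label \(k\), so that exact recovery means \(\widehat y_j=e_{\pi(j)}\) for every \(j\). By hypothesis the target states \(h_j\) depend only on the fixed identifier \(j\). Hence \(a_j:=f(h_j)\) is a fixed vector, independent of \(\pi\). All dependence on \(\pi\) enters through the single broadcast vector \(c_\pi:=g(z_\pi)\), where \(z_\pi\) is whatever the encoder produces. Exact realization of every instance therefore gives
\begin{equation*}
a_j+c_\pi=e_{\pi(j)}\qquad\text{for all } j\in[n]\text{ and all permutations }\pi .
\end{equation*}

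The key step is to eliminate \(c_\pi\) by differencing two targets under the same permutation:
\begin{equation*}
a_1-a_2=e_{\pi(1)}-e_{\pi(2)}\quad\text{for every }\pi .
\end{equation*}
The left-hand side does not depend on \(\pi\). Take \(\pi=\mathrm{id}\) and \(\pi'=(1\,2)\), which exist because \(n\geq2\). This gives \(e_1-e_2=e_2-e_1\), so \(e_1=e_2\). That contradicts the requirement that distinct labels have distinct label vectors, which is implicit in ``exact target output represented by a label vector.'' The argument uses only that the encoding \(k\mapsto e_k\) is injective, not that the \(e_k\) are one-hot. Since the transposition \((1\,2)\) changes only the labels at targets \(1\) and \(2\), two instances already suffice.

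The main thing to get right is the modeling hypothesis rather than any computation. We must make precise that the prediction is \emph{exactly} additive, so the nonlinear-decoder caveat from the preceding discussion is excluded. We must also make precise that ``no other path'' forces \(h_j\), and hence \(f(h_j)\), to be source-independent; otherwise \(a_j\) could vary with \(\pi\) and the differencing would fail. I would state this as a formal convention at the start of the proof. Then I would note the structural reading of the result: additive broadcast can only shift all targets' outputs by a common vector, while a permutation requires target-specific, opposite changes.
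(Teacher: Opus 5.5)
Your proof is correct and is essentially the paper's argument. The paper differences each target's output across the two transposition-related permutations (the shift \(g(z_\pi)-g(z_{\pi'})\) is common to all targets), while you difference the two targets within one permutation and then compare permutations. Both orderings reduce to the same vanishing mixed second difference, which forces a nonzero label-vector difference to equal its own negative.
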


\begin{proof}
Choose two permutations that differ by exchanging the labels of targets
\(p\) and \(q\).  The change
\(g(z_\pi)-g(z_{\pi'})\) is identical for every target, whereas the required
changes at \(p\) and \(q\) are opposite nonzero label-vector differences.
\end{proof}

A general decoder \(F(h_j,z)\) can escape
Proposition~\ref{prop:separable}. For example, a bilinear map, FiLM, or a
sufficiently expressive MLP can use \(h_j\) to select an
identifier-specific subspace of a wide VN. Even
\(\MLP(W_h h_j+W_z z)\) is not generally separable after the joint
nonlinearity. Such a successful solution is best understood as
\emph{implicit addressing}: the memory partition and selection operation are
synthesized inside the feature coordinates and decoder rather than supplied by
the VN broadcast.

A literal implementation stores \(M\) payload blocks in one vector and uses a
one-hot gate to select one block.  Exact linear generation of that gate has a
large address interface.

\begin{proposition}[Linear one-hot addressing]
\label{prop:onehot}
Let \(a_1,\ldots,a_M\in\R^p\) be target addresses.  If a linear map
\(W\in\R^{M\times p}\) satisfies \(Wa_m=e_m\) for every \(m\), then
\begin{equation}
p\geq M.
\end{equation}
\end{proposition}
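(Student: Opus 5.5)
The plan is a rank argument. I will assemble the addresses into the matrix \(A=[a_1\ \cdots\ a_M]\in\R^{p\times M}\). Then the \(M\) hypotheses \(Wa_m=e_m\) say exactly that
\[
WA=I_M .
\]
Everything else follows from comparing ranks on the two sides of this identity.

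First, the right-hand side has \(\operatorname{rank}(I_M)=M\), because the standard basis vectors \(e_1,\ldots,e_M\) are linearly independent. Second, the rank of a product is at most the rank of each factor, so
\[
M=\operatorname{rank}(WA)\le\operatorname{rank}(W)\le\min(M,p).
\]
This gives \(p\ge M\) immediately.

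An equivalent formulation avoids ranks and may be the version I write out. Suppose some combination \(\sum_m c_m a_m\) vanishes. Applying \(W\) gives \(\sum_m c_m e_m=0\), so every \(c_m=0\). Hence the addresses \(a_1,\ldots,a_M\) are linearly independent in \(\R^p\), and \(\R^p\) cannot hold more than \(p\) independent vectors, so \(M\le p\). I will present this version because it also shows that the address vectors themselves must already span an \(M\)-dimensional space.

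There is no substantive obstacle. The only care needed is to note that the result holds for arbitrary real addresses, with no precision or boundedness assumption. I will also remark that the bound is tight: taking \(p=M\), \(a_m=e_m\) and \(W=I_M\) attains it. This sets up the contrast with the \(O(\log M)\)-dimensional dot-product addresses, where exact linear one-hot generation is replaced by a softmax that selects the nearest address.
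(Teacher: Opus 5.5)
Your proposal is correct and is essentially the paper's proof: the paper also forms \(A=[a_1,\ldots,a_M]\), observes \(WA=I_M\), and concludes \(M=\operatorname{rank}(I_M)\le\operatorname{rank}(A)\le p\). Your linear-independence version is exactly this bound on \(\operatorname{rank}(A)\), and your first version, which bounds through \(\operatorname{rank}(W)\le\min(M,p)\) instead, is equally valid.
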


\begin{proof}
Writing \(A=[a_1,\ldots,a_M]\) gives \(WA=I_M\), hence
\(M=\operatorname{rank}(I_M)\leq\operatorname{rank}(A)\leq p\).
\end{proof}

Dot-product addressing does not require this one-hot representation.  It
compares a compact query against the keys of all slots and normalizes the
resulting similarities.

\begin{proposition}[Soft partition routing]
\label{prop:routing}
Let \(g:[n]\rightarrow[M]\) define groups
\(I_m=\{i:g(i)=m\}\).  Suppose unit address vectors
\(a_1,\ldots,a_M\in\R^p\) satisfy
\[
a_m^\top a_{m'}\leq1-\Delta
\quad\text{for }m\neq m',
\qquad \Delta>0.
\]
Assign key \(k_i=a_{g(i)}\) to source \(i\) and query \(q_m=a_m\) to slot
\(m\).  At temperature \(\tau\), the attention mass assigned outside \(I_m\)
is bounded by
\begin{equation}
\delta_m
\leq
\frac{n-|I_m|}{|I_m|}
\exp(-\Delta/\tau).
\label{eq:leak}
\end{equation}
For target-to-slot reading, the weight assigned by address \(a_m\) to its
matching slot is at least
\begin{equation}
\rho_{mm}
\geq
\frac{1}{1+(M-1)\exp(-\Delta/\tau)}.
\label{eq:read-margin}
\end{equation}
\end{proposition}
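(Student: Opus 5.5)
The plan is to write each softmax weight as a ratio of exponentials of scaled dot products, and then bound numerator and denominator separately. Because the address vectors are unit vectors, \(a_m^\top a_m=1\), while every mismatched pair satisfies \(a_m^\top a_{m'}\le 1-\Delta\). I will read the temperature \(\tau\) as the effective scale of the logits, i.e.\ \(\tau\) absorbs \(\tau_{\mathrm{in}}\sqrt d\) in the write and \(\tau_{\mathrm{out}}\sqrt d\) in the read. Both bounds then follow from the logit gap of at least \(\Delta/\tau\) between matched and mismatched pairs. This is the same mechanism that the construction of Theorem~\ref{thm:upper} relies on to route source \(i\) to slot \(g(i)\).

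For the write bound \eqref{eq:leak}, fix slot \(m\) with query \(q_m=a_m\). A source \(i\in I_m\) has key \(a_m\) and logit \(1/\tau\). A source \(i\notin I_m\) has key \(a_{g(i)}\) with \(g(i)\ne m\), so its logit is at most \((1-\Delta)/\tau\). The leaked mass is
\[
\delta_m=\frac{\sum_{i\notin I_m}e^{a_m^\top a_{g(i)}/\tau}}{\sum_{i\in I_m}e^{1/\tau}+\sum_{i\notin I_m}e^{a_m^\top a_{g(i)}/\tau}}.
\]
I drop the off-group terms from the denominator, which only enlarges the ratio. I then bound the numerator by \((n-|I_m|)e^{(1-\Delta)/\tau}\) and the denominator by \(|I_m|e^{1/\tau}\). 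The factors \(e^{1/\tau}\) cancel, which gives \(\delta_m\le\frac{n-|I_m|}{|I_m|}e^{-\Delta/\tau}\). Here I assume \(|I_m|\ge1\), as holds for the balanced partitions used in the construction.

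For the read bound \eqref{eq:read-margin}, I take the slot keys to be the slot addresses, so slot \(m'\) has key \(a_{m'}\), and the target queries with \(a_m\). Then
\[
\rho_{mm}=\frac{e^{1/\tau}}{e^{1/\tau}+\sum_{m'\ne m}e^{a_m^\top a_{m'}/\tau}}\ge\frac{1}{1+(M-1)e^{-\Delta/\tau}},
\]
after dividing numerator and denominator by \(e^{1/\tau}\) and bounding each of the \(M-1\) off-diagonal terms.

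The main obstacle is not the algebra but making the read setting precise. The proposition does not say which keys the slots expose at read time, and in the full architecture \(K_Z=ZU_K\) also contains written content. I would therefore state the assumption that the read logits come from the address component, that is, \(k_{m'}=a_{m'}\) up to the normalization absorbed into \(\tau\). This can be arranged in the construction by giving \(U_K\) a projection onto the fixed address coordinates, so the written content does not perturb the keys. With that convention both bounds follow from monotonicity of \(\exp\) alone.
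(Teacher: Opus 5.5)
Your proof is correct and is essentially the paper's argument: matched logits equal $1/\tau$, mismatched logits are at most $(1-\Delta)/\tau$, and bounding the softmax numerator and denominator (dropping the off-group terms from the write denominator) gives both bounds. The assumptions you state explicitly, namely $|I_m|\geq 1$ and that slot $m'$ exposes its address $a_{m'}$ as its read key, are exactly what the paper's one-line proof uses without saying so.
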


\begin{proof}
Matching logits equal \(1/\tau\), while every non-matching logit is at most
\((1-\Delta)/\tau\).  Summing the corresponding exponentials gives both
bounds.
\end{proof}

Constant-margin binary or spherical codebooks contain \(M\) addresses in
\(p=O(\log M)\) dimensions.  Conversely, under \(b\)-bit precision, merely
representing \(M\) distinct addresses requires \(pb\geq\log_2 M\).  Thus
dot-product cross-attention realizes near-disjoint addressing with an
asymptotically logarithmic address dimension, whereas exact linear one-hot
gating requires \(p\geq M\).  The payload capacity \(Md\) is unchanged; the
gain concerns the interface used to organize and retrieve it.

Distinct slot embeddings remove the exact symmetry obstruction, and lower
temperature sharpens routing.  In our implementation, static addresses are
kept separate from dynamic contents and the same identifier-derived address
space is used for source writes and target reads.

\paragraph{Reading and integrating the slot memory.}
Once the slots have gathered the global information, it must be returned to the
graph.  A shared global FiLM read
\cite{perez2018film,brockschmidt2020gnnfilm},
\begin{equation}
[\gamma,\beta]=\Phi(Z_1,\ldots,Z_M),
\qquad
h_j^+=(1+\gamma)\odot h_j+\beta,
\label{eq:global-slot-film}
\end{equation}
is already non-separable: although \(\gamma,\beta\) are broadcast, their
multiplicative interaction with \(h_j\) can implement an implicit
target-dependent gate.  It can therefore decode a factorized wide state when
target addresses are aligned with its feature coordinates.

Our default is an explicit target-to-slot read,
\begin{equation}
\rho_{jm}
=
\softmax_m\!\left(
\frac{q_{\mathrm r}(h_j)^\top k_{\mathrm r}(Z_m)}{\sqrt d}
\right),
\qquad
O_j=\sum_{m=1}^{M}\rho_{jm}v_{\mathrm r}(Z_m),
\label{eq:target-slot-read}
\end{equation}
which places the selection on the memory axis rather than inside hidden feature
blocks.  Source-to-slot attention determines where information is written;
Equation~\eqref{eq:target-slot-read} determines which compartments each target
reads.  This introduces no direct real-node attention and costs \(O(nMd)\).

The target-specific context may be added to \(h_j\), or integrated through a
small Slot-FiLM map,
\begin{equation}
[\gamma_j,\beta_j]
=
\MLP_{\mathrm{film}}([h_j,O_j]),
\qquad
h_j^+=(1+\gamma_j)\odot h_j+\beta_j.
\label{eq:slot-film}
\end{equation}
The multiplicative branch aligns retrieved content with local features.
Zero-initializing its final map gives \(h_j^+=h_j\) and
\(\partial h_j^+/\partial h_j=I\); a small near-zero initialization preserves
this direct path while allowing gradients to reach the routing branch.

\section{Multiplicity-Preserving Virtual Nodes}
\label{sec:counting}

Addressability determines where information is stored, but a normalized write can still discard how many nodes contributed.  One direct solution is an unnormalized weighted sum, as in cardinality-preserved attention and the ACAM tokens of NetDiff \cite{zhang2019cardinality,marcoccia2026netdiff}.  Such reads preserve additive mass, but their norm can grow with graph size and score concentration, which requires additional scaling or clipping.  We instead retain softmax normalization and place the querying latent inside its own attention set.

\subsection{Replication blindness of normalized attention}

Standard cross-attention writes
\begin{equation}
\Att(q,X)=
\frac{\sum_{x\in X}\exp(s(q,x))v(x)}
     {\sum_{x\in X}\exp(s(q,x))}.
\label{eq:standardatt}
\end{equation}
For a multiset $X$, let $rX$ repeat every element $r$ times.

\begin{proposition}[Replication invariance]
\label{prop:replication}
For every $q$, every integer $r\geq1$, and arbitrary learned score and value functions,
\begin{equation}
    \Att(q,rX)=\Att(q,X).
\end{equation}
The result holds independently for every head and every slot.
\end{proposition}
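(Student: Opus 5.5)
The plan is a direct computation from the definition in Equation~\eqref{eq:standardatt}, treating the multiset as a finite sum indexed by elements with multiplicity. First I will fix notation. Write the multiset \(X\) as a finite list \(x_1,\ldots,x_k\), repetitions allowed. Then \(rX\) is the list in which each \(x_i\) appears exactly \(r\) times. The only fact used about the score \(s\) and value \(v\) is that they are functions: each evaluates pointwise on an element and depends only on \(q\) and that element, not on the rest of the set or on multiplicities. This is exactly why the claim holds for arbitrary learned score and value maps.

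Next I will expand numerator and denominator over \(rX\). Grouping the \(r\) copies of each \(x_i\) gives
\[
\sum_{x\in rX}\exp(s(q,x))v(x)=r\sum_{i=1}^{k}\exp(s(q,x_i))v(x_i),
\qquad
\sum_{x\in rX}\exp(s(q,x))=r\sum_{i=1}^{k}\exp(s(q,x_i)).
\]
The denominator is a sum of strictly positive exponentials over a nonempty multiset, so it is positive. Dividing then cancels the common factor \(r\ge1\) and yields \(\Att(q,rX)=\Att(q,X)\). I will note that a temperature or \(\sqrt d\) scaling inside \(s\), as in the node-to-slot write, is absorbed into \(s\) and changes nothing.

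Finally, for the multi-head and per-slot statement I will observe two things. Each head \(h\) and each slot query \(q_m\) applies the same formula with its own score function \(s_h(q_m,\cdot)\) and value function \(v_h\). The argument above is uniform in these choices, so every head output is unchanged. Any subsequent concatenation, output projection \(W_O\), or residual \(S^{(0)}\) acts on these unchanged outputs, so the full write \(Z\) in Equation~\eqref{eq:slotread} is also invariant under uniform replication of the source set.

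There is no substantive obstacle. The only point needing care is stating the hypothesis that \(s\) and \(v\) depend on each element individually, and not on the multiset as a whole, for instance through layer normalization across the set. That caveat is what the anchored read of this section will exploit: an anchor placed inside the normalization is not replicated, so the factor \(r\) no longer cancels.
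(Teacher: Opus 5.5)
Your proposal is correct and follows the paper's proof: uniform replication multiplies both the numerator and the denominator of Eq.~\eqref{eq:standardatt} by $r$, which then cancels. Your additional remarks, namely the positivity of the denominator, the pointwise dependence of $s$ and $v$ on single elements, and the invariance of the full write $Z$ through $W_O$ and the unchanged residual $S^{(0)}$, are sound elaborations of that same argument.
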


\begin{proof}
Uniform replication multiplies both numerator and denominator of Eq.~\eqref{eq:standardatt} by $r$.
\end{proof}

Thus normalized attention represents a weighted empirical distribution rather than its absolute counting measure.  Increasing slot width, slot count, or downstream depth cannot reconstruct multiplicity once all source-to-memory paths satisfy Proposition~\ref{prop:replication}; a formal induction over stacked layers is given in Appendix~\ref{app:anchor}.  A residual outside the attention, $z+\Att(z,X)$, remains replication invariant whenever $z$ is unchanged.

\subsection{Anchoring the latent query inside its attention}

Figure~\ref{fig:anchor} shows the modification relative to the normalized Cross-Attn VN write. Let the slot query be $q$, let the same latent state produce a private anchor key/value $(k_0,a)$, and let node keys/values be $(k_i,v_i)$. Anchored attention is
\begin{equation}
z(X)=
\frac{
 e^{s_0}a+\sum_i e^{s_i}v_i
}{
 e^{s_0}+\sum_i e^{s_i}
},
\quad
s_0=q^\top k_0,\quad s_i=q^\top k_i.
\label{eq:anchored}
\end{equation}
Each slot has its own private anchor, and anchors do not mix across slots.

\begin{figure}[t]
\centering
\resizebox{0.5\columnwidth}{!}{%
\begin{tikzpicture}[
  font=\small,
  src/.style={circle,draw=black!55,fill=sourceblue!25,line width=.5pt,
              minimum size=5.2mm,inner sep=0pt},
  slot/.style={rectangle,rounded corners=2.5pt,
               draw=slotgreen!65!black,fill=slotgreen!16,
               minimum width=10mm,minimum height=5.8mm,line width=.6pt},
  anch/.style={rectangle,rounded corners=2.5pt,
               draw=anchorpurple!70!black,fill=anchorpurple!20,
               minimum width=10mm,minimum height=5.8mm,line width=.6pt},
  arr/.style={-{Latex[length=1.9mm]},line width=.55pt,black!60},
  anchorarr/.style={-{Latex[length=1.9mm]},line width=.85pt,
                    anchorpurple!80!black}
]
\node[slot] (q) at (2.65,1.55) {$q_m$};
\node[anch] (a) at (0.35,0) {$(k_{m0},a_m)$};
\foreach \x [count=\k] in {1.65,2.65,3.65,4.65}{
  \node[src] (x\k) at (\x,0) {$x_{\k}$};
  \draw[arr] (q) -- (x\k);
}
\draw[anchorarr] (q) -- (a);
\end{tikzpicture}}
\caption{Anchored Cross-Attn VN write for slot \(m\). The slot query \(q_m\)
attends jointly to the source keys/values and to one private slot-derived
anchor \((k_{m0},a_m)\). Because the anchor participates in the same softmax
normalization, its weight
\(\alpha_{m0}=e^{s_{m0}}/(e^{s_{m0}}+\sum_i e^{s_{mi}})\)
reveals the source normalization mass.}
\label{fig:anchor}
\end{figure}

\begin{proposition}[Density-induced displacement]
\label{prop:segment}
Assume $k$ identical matching nodes have score $s$ and value $v$, while the anchor has score $s_0$ and value $a\neq v$.  Then
\begin{equation}
z_k=
\frac{e^{s_0}a+k e^s v}{e^{s_0}+k e^s}
=(1-\lambda_k)a+\lambda_kv,
\qquad
\lambda_k=\frac{k e^s}{e^{s_0}+k e^s}.
\end{equation}
The map $k\mapsto z_k$ is injective for finite $k\geq0$.
\end{proposition}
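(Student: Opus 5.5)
The proof has two parts: an algebraic rewriting of the anchored attention into the convex-combination form, and then monotonicity of the mixing weight \(\lambda_k\) together with \(a\neq v\) to get injectivity.

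\emph{Identity.} Substitute the configuration into Eq.~\eqref{eq:anchored}: \(k\) identical matching nodes give numerator \(e^{s_0}a+ke^s v\) and denominator \(e^{s_0}+ke^s\). Dividing through gives
\[
z_k=\frac{e^{s_0}}{e^{s_0}+ke^s}\,a+\frac{ke^s}{e^{s_0}+ke^s}\,v=(1-\lambda_k)a+\lambda_k v,
\]
since the two coefficients sum to one. This is routine.

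\emph{Strict monotonicity.} Write \(c=e^{s-s_0}>0\), so that
\[
\lambda_k=\frac{kc}{1+kc}=1-\frac{1}{1+kc}.
\]
Because \(c>0\), the map \(x\mapsto 1-1/(1+cx)\) is strictly increasing on \([0,\infty)\). Hence \(\lambda_0=0<\lambda_1<\lambda_2<\cdots<1\), and \(k\mapsto\lambda_k\) is injective on the nonnegative integers. The essential point is that the anchor contributes strictly positive mass \(e^{s_0}\). Without it, \(\lambda_k\equiv 1\) for \(k\geq1\), which is exactly the replication invariance of Proposition~\ref{prop:replication}.

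\emph{Injectivity of \(z_k\).} Suppose \(z_k=z_{k'}\). Then
\[
(\lambda_k-\lambda_{k'})(v-a)=0.
\]
Since \(a\neq v\), the vector \(v-a\) is nonzero, which forces \(\lambda_k=\lambda_{k'}\) and hence \(k=k'\). Geometrically, the points \(z_k\) march monotonically along the open segment from \(a\) (at \(k=0\)) toward \(v\), never reaching \(v\).

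There is no real obstacle. The only steps needing care are using \(a\neq v\) to cancel a nonzero vector, rather than dividing by it, and including the case \(k=0\), where \(z_0=a\).
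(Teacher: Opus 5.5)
Your proof is correct and follows the paper's own argument: $\lambda_k$ is strictly increasing in $k$, so distinct multiplicities land at distinct points on the segment between $a$ and $v$. You simply make the monotonicity explicit through $\lambda_k = 1 - 1/(1+kc)$ with $c=e^{s-s_0}>0$, and you spell out the cancellation of the nonzero vector $v-a$, which the paper leaves implicit.
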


\begin{proof}
$\lambda_k$ is strictly increasing in $k$, so distinct multiplicities occupy distinct points on the segment between $a$ and $v$.
\end{proof}

The representation therefore moves away from its private reference as the density of matching nodes increases.  More generally, the anchor exposes the complete softmax normalization mass.

\begin{proposition}[Recovery of normalized content and mass]
\label{prop:mass}
Let
\[
Z=\sum_i e^{s_i},
\qquad
\mu=\frac{1}{Z}\sum_i e^{s_i}v_i.
\]
Assume the anchor value occupies a dedicated coordinate in which all node values are zero.  From the anchored output and known anchor logit $s_0$, one recovers
\begin{align}
\alpha_0&=\frac{e^{s_0}}{e^{s_0}+Z},&
Z&=e^{s_0}\frac{1-\alpha_0}{\alpha_0},&
\mu&=\frac{z_{\perp}}{1-\alpha_0}.
\end{align}
Consequently, the unnormalized weighted sum $Z\mu$ is recoverable.
\end{proposition}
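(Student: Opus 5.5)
The plan is to read the anchor weight off the dedicated coordinate, invert the softmax normalization algebraically to get \(Z\), and then divide out the node share to get \(\mu\). Write \(z\) for the anchored output of Eq.~\eqref{eq:anchored}.

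\emph{Setup.} Decompose every value vector into the dedicated anchor coordinate and its complement. Let \(e\) be the unit vector of the dedicated coordinate, and write \(a=c\,e+a_\perp\) with \(c\neq0\) and \(a_\perp\perp e\). By assumption \(v_i^\top e=0\) for every \(i\). The first step is to project Eq.~\eqref{eq:anchored} onto \(e\), which gives
\[
z^\top e=\frac{e^{s_0}c}{e^{s_0}+Z}=c\,\alpha_0 .
\]
Since \(c\) is a fixed, known slot-derived quantity, this yields \(\alpha_0=z^\top e/c\). Normalizing so that \(c=1\) makes the identification literal, and this is the reading of ``occupies a dedicated coordinate'' that I will use.

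\emph{Recovering \(Z\).} From \(\alpha_0=e^{s_0}/(e^{s_0}+Z)\) we have \(\alpha_0\in(0,1]\), and \(\alpha_0>0\) always holds. Solving gives \(Z=e^{s_0}(1-\alpha_0)/\alpha_0\), where \(e^{s_0}\) is known because \(s_0\) is known. This map is the inverse of the strictly monotone map \(Z\mapsto\alpha_0\), which is also the content of Proposition~\ref{prop:segment}.

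\emph{Recovering \(\mu\).} Let \(z_\perp\) denote the node-value part of the output, namely \(z\) minus its anchor contribution. Project onto the complement of \(e\) to get
\[
z-(z^\top e)e=\alpha_0 a_\perp+(1-\alpha_0)\mu .
\]
If \(a\) is supported only on \(e\), then \(a_\perp=0\). Otherwise \(\alpha_0 a_\perp\) is known once \(\alpha_0\) and \(a\) are known, so it can be subtracted, giving \(z_\perp=(1-\alpha_0)\mu\). This uses \(\sum_i e^{s_i}v_i/(e^{s_0}+Z)=(1-\alpha_0)\mu\), which follows from \(Z/(e^{s_0}+Z)=1-\alpha_0\). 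Dividing by \(1-\alpha_0\) recovers \(\mu\) whenever \(1-\alpha_0>0\), i.e.\ whenever \(X\) is nonempty. When \(X\) is empty, \(\mu\) is undefined but \(Z\mu=0\), so nothing further is needed.

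\emph{The weighted sum.} Multiplying the two recovered quantities gives
\[
Z\mu=e^{s_0}\frac{1-\alpha_0}{\alpha_0}\cdot\frac{z_\perp}{1-\alpha_0}=\frac{e^{s_0}}{\alpha_0}\,z_\perp .
\]
This expression is also valid in the empty case.

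The main point requiring care is the \emph{interpretation of the hypothesis}. It must fix how \(z_\perp\) is defined and guarantee that the anchor's dedicated-coordinate entry is known and nonzero. Beyond that, everything is elementary algebra on the softmax normalization.
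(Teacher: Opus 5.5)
Your proof is correct and follows the paper's own reasoning. The paper treats this as direct algebra, written out in the proof of Theorem~\ref{thm:wlinjective} and in the appendix semantic/mass decomposition: it reads \(\alpha_0\) off the dedicated coordinate, where the anchor value is \(1\) and node values are \(0\), inverts \(\alpha_0=e^{s_0}/(e^{s_0}+Z)\) to get \(Z\), and divides the remaining coordinates \((1-\alpha_0)\mu\) by \(1-\alpha_0\). Your extra handling of a known entry \(c\neq 1\), a nonzero \(a_\perp\), and the empty case via \(Z\mu=e^{s_0}z_\perp/\alpha_0\) is a harmless generalization of that same computation.
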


In implementation, we expose $\alpha_0$ as a separate mass channel before LayerNorm.  The semantic channel uses the conditionally normalized node weights
\[
\widetilde\alpha_i=\frac{\alpha_i}{1-\alpha_0}=\frac{e^{s_i}}{Z},
\]
while the count pathway receives
\[
\log Z=s_0+\log(1-\alpha_0)-\log\alpha_0.
\]
This decomposition keeps label content invariant to replication and isolates multiplicity for counting.

\begin{remark}[Relation to register tokens and attention sinks]
Appending special tokens to an attention set is a known stabilization device: register tokens absorb spurious global attention in vision Transformers \cite{darcet2024registers}, and attention sinks absorb excess probability mass in streaming language models \cite{xiao2024streaming}.  These mechanisms treat the absorbed mass as a nuisance to be parked.  The private anchor inverts this reading: because the anchor's logit is known, the mass it absorbs is a \emph{measurement} of the softmax normalizer $Z$, turning a stabilization trick into an explicit cardinality channel.
\end{remark}

\subsection{A 1-WL characterization on bounded multisets}

The preceding result gives a direct characterization in the multiset setting
underlying 1-WL. Let \(\Colors\) be a finite color alphabet and let \(X\)
be a nonempty multiset over \(\Colors\) with bounded size.

\begin{theorem}[Injective anchored multiset read]
\label{thm:wlinjective}
Anchored attention followed by a sufficiently expressive post-processing map
can implement an injective representation of \(X\). Consequently, on bounded
colored neighborhoods it can realize one 1-WL refinement step. Standard
normalized attention without an additional cardinality path is not injective
on any domain containing both a multiset \(X\) and one of its uniform
replications \(rX\), \(r>1\).
\end{theorem}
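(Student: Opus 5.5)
The proof has three parts: injectivity of the anchored read on bounded multisets, the 1-WL consequence, and non-injectivity of plain softmax. Throughout, identify a multiset \(X\) over \(\Colors=\{c_1,\ldots,c_K\}\) with its count vector \(N=(N_1,\ldots,N_K)\in\{0,\ldots,B\}^K\), where \(B\) bounds the size. The strategy is to reduce the anchored read to recovery of \(N\) and then invoke Proposition~\ref{prop:mass}.

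\textbf{Injective anchored read.} Choose the node keys so that every color receives the same logit, \(s_i=s\). Choose the values as one-hot codes \(v(c_k)=e_k\), placed in coordinates disjoint from a dedicated anchor coordinate. Then \(Z=e^{s}|X|\) and \(\mu=N/|X|\). By Proposition~\ref{prop:mass}, the anchored output together with the known anchor logit \(s_0\) determines \(Z\) and \(\mu\), and hence \(Z\mu=e^{s}N\). Since \(e^{s}\) is a fixed constant, \(N\) is recovered. This shows that \(X\mapsto z(X)\) is injective.

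A single head suffices here. Alternatively, one could use one slot per color, with the key selecting that color, and apply Proposition~\ref{prop:segment} to get per-color injectivity in \(N_k\).

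The post-processing map only has to be defined on the finite image \(\{z(X)\}\), which has at most \((B+1)^K\) points. So any map sending these points to distinct codes works: a lookup table, or a continuous interpolant realized by an MLP through universal approximation on finitely many separated points.

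\textbf{1-WL refinement.} For a node \(v\) with color \(c_v\), apply the anchored read with \(v\) as query to the multiset of neighbor colors. Concatenate the result with an injective encoding of \(c_v\). Then apply a post-processing map that injectively encodes the pair \((c_v,N(v))\) into a new color alphabet. This is exactly the 1-WL update \(c_v\mapsto\mathrm{hash}(c_v,\{\!\{c_u:u\in N(v)\}\!\})\). Boundedness of degrees keeps the image finite, so the hash is realizable.

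\textbf{Negative part.} This follows immediately from Proposition~\ref{prop:replication}. For any \(r>1\) we have \(\Att(q,rX)=\Att(q,X)\), while \(rX\neq X\) as multisets because \(|rX|=r|X|>|X|\), using that \(X\) is nonempty. Any composition with later maps inherits the equality. For stacked layers, one appeals to the induction in Appendix~\ref{app:anchor}.

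\textbf{Main obstacle.} The main difficulty lies in the positive direction. It consists of justifying a fixed, known \(s_0\), and of ensuring that the mass and content channels are genuinely separable. This means choosing the parameters so that the anchor coordinate is orthogonal to the node values, as Proposition~\ref{prop:mass} requires. I would make this explicit by choosing \(k_0\), \(a\) and the value projection by hand. I would also remark that finite precision is harmless because only finitely many values of \(\alpha_0\) occur.
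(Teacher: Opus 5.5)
Your proposal is correct and follows the paper's proof essentially step for step. The shared steps are: equal node logits (the paper sets these and \(s_0\) to zero), one-hot color values with a dedicated anchor coordinate, recovery of \(|X|\) and \(\mu(X)\) from \(\alpha_0\) to rebuild the integer histogram, and an injective post-processing map on a finite domain for the 1-WL update. The only difference is in the negative part, where your appeal to Proposition~\ref{prop:replication} covers every choice of scores and values, whereas the paper only exhibits \(\mu(rX)=\mu(X)\) for its zero-logit one-hot parametrization.
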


\begin{proof}
Set every node logit to zero and encode color \(c\in\Colors\) by the
canonical vector \(e_c\). Standard normalized attention then returns
\[
\mu(X)
=
\frac{1}{|X|}
\sum_{x\in X} e_x,
\]
the normalized color histogram. Hence
\[
\mu(rX)=\mu(X),
\]
so this representation is not injective whenever both \(X\) and \(rX\) belong
to the task domain.

Now add an anchor with logit \(s_0=0\). Give the anchor value \(1\) in a
dedicated coordinate and zero in all color coordinates, while node values are
zero in the anchor coordinate and equal to \(e_x\) in the color coordinates.
The anchored output is then
\[
z(X)
=
\left[
\alpha_0,\,
(1-\alpha_0)\mu(X)
\right],
\qquad
\alpha_0=\frac{1}{1+|X|}.
\]
Therefore both the multiset size and its normalized color histogram are
recoverable:
\[
|X|
=
\frac{1-\alpha_0}{\alpha_0},
\qquad
\mu(X)
=
\frac{z_{\mathrm{color}}}{1-\alpha_0}.
\]
Their product gives
\[
|X|\mu(X)
=
\sum_{x\in X} e_x,
\]
which is the integer color histogram and uniquely determines \(X\).

A 1-WL refinement is an injective function of the current node color and the
multiset of neighbor colors. Since both the color alphabet and neighborhood
size are bounded, the set of possible inputs is finite, and a sufficiently
expressive post-processing MLP can represent the corresponding injective
update.
\end{proof}

The theorem identifies the exact statistic missing from ordinary softmax:
normalized attention retains color proportions, while the anchor additionally
reveals the total mass needed to recover the counting measure used by 1-WL.

\begin{remark}[Existence versus learnability]
\label{rem:existence}
The proof of Theorem~\ref{thm:wlinjective} is constructive and uses a
structured parameter setting: zero node logits, a dedicated anchor coordinate,
and canonical color encodings.  It is an expressivity statement in the usual
WL-analysis sense---it shows that anchored attention does not inherit the
representation-level obstruction of Proposition~\ref{prop:replication}---but
it does not describe what gradient-based training finds in practice.  That
second question is empirical and is addressed by the paired-replication
diagnostics of Section~\ref{sec:benchmarks} and Appendix~\ref{app:protocol}.
\end{remark}

\begin{corollary}[Multiplicity-aware Two-Radius]
\label{cor:countedwl}
Consider a bounded Two-Radius family in which source type
\((i,\pi(i))\) occurs \(c_i\) times and target \(t_i\) must predict
\((\pi(i),c_i)\). Then:
\begin{enumerate}[leftmargin=*,itemsep=1pt]
    \item colored 1-WL solves the task in two refinement rounds;
    \item a normalized global read satisfying
    Proposition~\ref{prop:replication} cannot distinguish
    \((\pi,c)\) from \((\pi,rc)\) when no parallel path reveals cardinality;
    \item addressable anchored slots with sufficient width solve the task by
    combining the partition of Theorem~\ref{thm:upper} with the injective
    multiset read of Theorem~\ref{thm:wlinjective}.
\end{enumerate}
\end{corollary}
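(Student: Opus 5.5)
I would prove the three items of Corollary~\ref{cor:countedwl} separately, reusing the earlier results as black boxes.

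\textbf{Item 1.} I would run colored 1-WL explicitly on the Two-Radius graph, with initial colors given by node type, identifier and (for sources) label. In the first refinement round every central node aggregates the multiset of its neighbors' colors. This multiset contains each source color \((i,\pi(i))\) with multiplicity \(c_i\), together with the target colors. Since 1-WL refinement is injective, the new central color determines the full table \(\{((i,\pi(i)),c_i)\}_i\). In the second round, target \(t_j\) aggregates the central colors, and its refined color is an injective function of \((j,\text{table})\). Hence \((\pi(j),c_j)\) is a function of this color, and the readout is a fixed map on a finite (bounded) domain. One point needs care: if \(|\calC|>1\), all central nodes receive the same color, so nothing is lost.

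\textbf{Item 2.} I would reduce to Proposition~\ref{prop:replication} and its layerwise induction in Appendix~\ref{app:anchor}. The instance \((\pi,rc)\) is exactly the uniform replication of the source multiset of \((\pi,c)\). Every source-to-memory write therefore returns identical slot states, so every target receives identical inputs and produces identical outputs. But the required outputs \(c_j\) and \(rc_j\) differ for \(r>1\), so the read cannot solve both instances. The hypothesis ``no parallel path reveals cardinality'' is exactly what excludes the structural central route, for example a sum-aggregating MPNN through \(\calC\). I would state this explicitly as the assumption under which the induction applies.

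\textbf{Item 3.} I would take the routing of Theorem~\ref{thm:upper}: the address keys make slot \(m\) attend only to sources with \(g(i)\in I_m\), and each slot is then anchored as in Eq.~\eqref{eq:anchored}. Within slot \(m\), the attended multiset is the bounded multiset of source colors \((i,\pi(i))\) for \(i\in I_m\), each with multiplicity \(c_i\). By Theorem~\ref{thm:wlinjective}, the anchored output followed by a per-slot MLP injectively encodes this multiset. Since colors are finite and counts are bounded, the MLP can rewrite it into the block layout of Theorem~\ref{thm:upper}, with block \(r(i)\) holding the codes of \(\pi(i)\) and \(c_i\). The width grows only by \(\lceil\log_2 c_{\max}/b\rceil\) per block. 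Target \(j\) then reads slot \(g(j)\) and block \(r(j)\) exactly as before, so permutation invariance and equivariance carry over unchanged.

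\textbf{Main obstacle.} The exact partition in Theorem~\ref{thm:upper} relies on hard routing, whereas softmax routing only gives the leakage bound \eqref{eq:leak}. With anchors present, the anchor weight \(\alpha_0\) must measure only the in-group mass; any leakage from other groups perturbs the recovered \(Z\). I would first try to take logits of non-matching sources to \(-\infty\), which is the limit \(\tau\to0\) allowed by the existence claim. Failing that, I would bound the leakage below half the minimal gap between the finitely many attainable values of \(\lambda_k\) from Proposition~\ref{prop:segment}. Finiteness of the bounded family guarantees that this gap is positive, so rounding in the MLP recovers the exact counts.
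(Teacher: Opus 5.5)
Your proposal is correct and follows the paper's proof sketch item by item: two 1-WL rounds through the central node, item~2 reduced directly to Proposition~\ref{prop:replication}, and item~3 obtained by partitioning identifiers as in Theorem~\ref{thm:upper} and applying the anchored read of Theorem~\ref{thm:wlinjective} inside each slot. The paper uses slot-local colors \((r(i),\pi(i))\), so the recovered histogram \(z_{\mathrm{color}}/\alpha_0\) is already the block layout, and it tacitly assumes hard routing; your leakage argument supplies care the sketch omits (and with your global colors \((i,\pi(i))\) leakage is in fact harmless for the in-group counts, because out-of-group sources enter only the common denominator, which cancels in \(z_x/\alpha_0=c_x e^{s_x-s_0}\)).
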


\begin{proof}[Proof sketch]
In the first 1-WL round, each structural central node receives the complete
multiset
\[
\bigl\{(i,\pi(i))^{c_i}:i\in[n]\bigr\}.
\]
Its refined color therefore determines both the label and multiplicity
associated with every identifier. In the second round, target \(t_i\) combines
this global color with its own identifier \(i\) and recovers
\((\pi(i),c_i)\).

The failure of normalized global attention follows directly from
Proposition~\ref{prop:replication}: uniformly multiplying all multiplicities
does not change the normalized source-to-memory representation.

For the anchored construction, partition identifiers across slots as in
Theorem~\ref{thm:upper}. Within slot \(g(i)\), assign identifier \(i\) the
private block \(r(i)\), so that a source of type \((i,\pi(i))\) is represented
by the slot-local color \((r(i),\pi(i))\). By
Theorem~\ref{thm:wlinjective}, the anchored write recovers the exact histogram
of these slot-local colors. Consequently, block \(r(i)\) records multiplicity
\(c_i\) in label coordinate \(\pi(i)\). Target \(t_i\) selects slot \(g(i)\)
and block \(r(i)\), recovering \((\pi(i),c_i)\).
\end{proof}

\section{Benchmarks}
\label{sec:benchmarks}

The experiments evaluate the two requirements jointly and then test counting in a separate graph-level setting.  For multiplicity-aware Two-Radius, all four variants use the same categorical inputs, three-layer mean-MPNN backbone, prediction heads, optimization schedule, and random seed; only the global communication module changes.  The exact data-generation, architecture, and training configuration used for Table~\ref{tab:counted} is given in Appendix~\ref{app:protocol}.

\subsection{Benchmark A: multiplicity-aware Two-Radius}

The multiplicity-aware task jointly evaluates the two theoretical requirements.  Its label component is exactly the original address-retrieval problem: target $t_i$ must recover the label associated with identifier $i$.  Its count component additionally tests whether the global read preserves absolute multiplicity.  This removes the need for a separate permutation-only benchmark while retaining a direct diagnostic for addressability.

We next add repeated source types.  For every identifier $i$, sample a label $\pi(i)$ and a base multiplicity $a_i\in\{1,\ldots,A\}$.  Each base instance is rendered at several global replication scales
$r\in\{1,\ldots,R\}$, giving
\[
c_i=r a_i
\]
indistinguishable copies of source $(i,\pi(i))$.  Target $t_i$ predicts both
\[
(\pi(i),c_i).
\]
The paired scales guarantee that standard normalized attention observes identical relative source distributions while the absolute targets differ.

For Table~\ref{tab:counted}, we use \(n=12\) source--target identifiers,
one structural central node, permutation-valued labels over \([12]\),
\(A=4\), and the three replication scales \(r\in\{1,2,3\}\). Thus the
largest target count is \(12\). Each minibatch samples \(32\) latent
assignments and renders every assignment at all three scales, yielding
\(96\) paired graphs. Graphs are padded to at most \(157\) node positions
(\(144\) sources, one center, and \(12\) targets), with all padding masked
from message passing and global attention.

We report \textsc{Label} accuracy, exact \textsc{Count} accuracy, and \textsc{Both}, which requires both predictions to be correct for the same target. We additionally track exact graph recovery as a diagnostic.  The comparison crosses the two properties directly: the VN remains a homogeneous bottleneck, the Cross-Attn VN is addressable but replication blind, and the Anchored Cross-Attn VN preserves both address and mass. An outer residual does not change the replication invariance.

One subtlety in reading Table~\ref{tab:counted}: by
Proposition~\ref{prop:replication}, the count predictions of the Cross-Attn VN
\emph{cannot} depend on the replication scale \(r\).  Its nonzero count
accuracy therefore does not reflect recovered multiplicity, but residual
correlations between the paired-scale protocol and the base multiplicities
$a_i$, which are visible to any distribution-level read.  The meaningful
contrast is that only the anchored variant can, in principle and in practice,
track the paired scales.

\paragraph{Optimization protocol.}
All variants are trained for \(200\) epochs with \(50\) minibatches per epoch,
for \(10{,}000\) optimizer updates. We use AdamW with learning rate
\(10^{-4}\) for the shared backbone and prediction heads and \(2\times10^{-4}\)
for the global module, zero weight decay and dropout, and gradient clipping at
norm \(5\). The count-loss weight is zero for the first \(30\) epochs, is
linearly increased to \(0.5\) over the next \(40\) epochs, and then remains
fixed. Cross-attention variants use \(12\) slots, \(4\) heads, address
temperature \(0.35\), and a Slot-FiLM final-layer initialization with standard
deviation \(10^{-3}\). At every epoch, validation metrics are averaged over
\(8\) freshly sampled minibatches (\(768\) rendered graphs), and the reported
checkpoint maximizes validation \textsc{Both}. Table~\ref{tab:counted} reports
the single run with seed \(0\); no variance estimate is implied.

\begin{table}[t]
\centering

\caption{Multiplicity-aware Two-Radius validation accuracy (\%).
Single run with seed \(0\). For each variant, we report the epoch with the
highest validation \textsc{Both} accuracy, evaluated on \(8\) freshly sampled
minibatches per epoch.}
\label{tab:counted}
\begin{tabular}{@{}lrrr@{}}
\toprule
Model & Label $\uparrow$ & Count $\uparrow$ & Both $\uparrow$\\
\midrule
MPNN & 7.8 & 17.1 & 1.3\\
VN & 8.9 & 18.3 & 1.4\\
Cross-Attn VN & 100.0 & 28.3 & 28.3\\
Anchored Cross-Attn VN & 100.0 & 100.0 & 100.0\\
\bottomrule
\end{tabular}
\end{table}

\subsection{Benchmark B: planted motif census}

To test whether the effect extends beyond duplicated key--value pairs, we use a graph-level motif census.  A connected graph contains a random background, distractor gadgets, and planted occurrences of six rooted motifs (triangle, square, star, path, clique, and diamond).  An MPNN must recognize local rooted structure; the global readout predicts the six motif counts.  Counts, background size, and motif composition have separate OOD splits.

This is an inductive-bias and extrapolation test rather than a strict impossibility result.  Mean pooling and standard cross-attention mainly encode relative motif prevalence, sum pooling is an additive count-aware control, and anchored slots expose query-dependent soft masses.  Graph size is deliberately \emph{not} provided as an input feature, so that any mass sensitivity must arise from the aggregation mechanism itself rather than from a side-channel scalar.  To keep this auxiliary benchmark compact, Table~\ref{tab:motifs} reports only IID evaluation and jittered-replication OOD, where replicated motif instances are perturbed by local rewiring and node-feature jitter.  We report macro log-MAE and macro log-$R^2$; the benchmark construction and controls are summarized in Appendix~\ref{app:motifs}.

\begin{table}[t]
\centering
\caption{Motif counting.  All readouts share the same training budget and are
selected by early stopping on validation log-MAE; we report the selected
checkpoint.  Jittered OOD perturbs replicated motifs while preserving the
counting target.}
\label{tab:motifs}
\begin{tabular}{@{}lrrrr@{}}
\toprule
& \multicolumn{2}{c}{IID} & \multicolumn{2}{c}{Jittered OOD}\\
\cmidrule(lr){2-3}\cmidrule(l){4-5}
Readout & log-MAE $\downarrow$ & log-$R^2$ $\uparrow$ & log-MAE $\downarrow$ & log-$R^2$ $\uparrow$\\
\midrule
Mean pool & 0.523 & 0.526 & 0.812 & $-0.211$\\
Sum pool & 0.140 & 0.966 & 0.237 & 0.892\\
Cross-Attn VN & 0.415 & 0.701 & 0.365 & 0.690\\
Anchored Cross-Attn VN & \textbf{0.012} & \textbf{1.000} & \textbf{0.020} & \textbf{0.999}\\
\bottomrule
\end{tabular}
\end{table}

\subsection{External validation: constrained link-set prediction}
\label{sec:external-validation}

We further evaluate the global module on constrained link-set prediction, a
task derived from the NetDiff benchmark \cite{marcoccia2026netdiff}. Given a
set of nodes with geometric and categorical features, the model predicts a binary
link for each candidate pair. Valid solutions must satisfy several coupled
constraints, including link symmetry, absence of self-links, bipartite
compatibility, bounded node degree, and a limited number of links per angular
sector. Hence, the decision for one pair depends on the links selected elsewhere
in the graph.

We remove the local MPNN and compare virtual-node readouts directly. The
Cross-Attn VN reaches an F1 score of \(0.751\), while the Anchored Cross-Attn VN
reaches \(0.816\). The mass-aware virtual node produces more confident and
globally coherent link predictions, indicating that multiplicity-sensitive
global aggregation is useful beyond explicit counting tasks. We stress the
scope of this experiment: it compares two readouts on a single seed of a
specialized benchmark, and should be read as a proof of concept that the
anchored read transfers to coherent global decisions---not as a definitive
benchmark.  A broader baseline suite and variance analysis are left to future
work.

\begin{table}[t]
\centering
\caption{Constrained link-set prediction without local message passing.
Single-seed proof of concept; see Section~\ref{sec:external-validation} for
scope.}
\label{tab:link-generation}
\small
\begin{tabular}{lc}
\toprule
Global module & F1 $\uparrow$ \\
\midrule
Cross-Attn VN & 0.751 \\
Anchored Cross-Attn VN & \textbf{0.816} \\
\bottomrule
\end{tabular}
\end{table}

\section{Related Work}
\label{sec:related}

\paragraph{Oversquashing and finite-capacity bottlenecks.}
Oversquashing was identified as a consequence of exponentially growing
receptive fields and graph curvature \cite{alon2021bottleneck,topping2022understanding,di2023over},
and bounded-capacity cuts were shown to limit what constant-width MPNNs can
transmit \cite{loukas2020depth}.  The Two-Radius construction of
\cite{mishayev2025short} isolates the capacity component from long-range
attenuation: distances and Jacobian path lengths stay constant while the
required content grows.  We adopt that task and ask which \emph{auxiliary}
memory organization resolves it.

\paragraph{Virtual nodes and global graph memory.}
Virtual nodes were introduced as a global communication shortcut
\cite{gilmer2017neural,ishiguro2019graph}; spectral analyses characterize how
they improve mixing \cite{southern2025virtual,hwang2022analysis}, and
expressivity comparisons show that MPNN+VN and attention are related but in
general incomparable primitives
\cite{cai2023connection,rosenbluth2024uniform}.  RANGE relays information
through attention nodes with positional encodings \cite{caruso2026range}.
Our contribution is complementary: rather than proposing a new global layer,
we characterize which properties (addressability, multiplicity preservation)
a VN must have, and supply a minimal mechanism for each.

\paragraph{Latent arrays and set attention.}
Slots written and read by cross-attention are structurally close to induced
set attention blocks and Perceiver-style latent arrays
\cite{lee2019set,jaegle2021perceiver}, and to the Deep Sets analysis of
injective multiset encoders \cite{zaheer2017deepsets,wagstaff2019limitations}.
Full and sparse graph Transformers instead give real nodes a direct global
attention path \cite{vaswani2017attention,rampasek2022gps,shirzad2023exphormer}.
We keep real-node computation purely local and restrict global computation to
the latent axis.

\paragraph{Expressivity, counting, and cardinality.}
MPNN expressivity is bounded by 1-WL \cite{xu2019powerful,morris2019weisfeiler};
principal neighborhood aggregation and cardinality-preserving attention show
that multiplicity must be encoded explicitly
\cite{corso2020pna,zhang2019cardinality}, and the ACAM tokens of NetDiff apply
this idea to constrained generation \cite{marcoccia2026netdiff}.  The anchor
mechanism achieves the same goal while retaining softmax normalization, and
admits the inverted reading of register tokens and attention sinks
\cite{darcet2024registers,xiao2024streaming} discussed in
Section~\ref{sec:counting}.

\paragraph{Feature-wise modulation.}
FiLM conditions features on auxiliary signals through affine modulation
\cite{perez2018film,brockschmidt2020gnnfilm}.  Our Slot-FiLM integration uses
the same primitive with identity-preserving initialization, so that the local
representation is protected while the global route is learned.

\section{Discussion and Conclusion}
\label{sec:discussion}

The analysis separates raw storage capacity from usable factorization.  A VN
of width \(D\) has \(Db\) finite-precision bits, and one wide VN with
\(D=Md\) has the same raw capacity as \(M\) slots of width \(d\).
Nevertheless, the standard homogeneous broadcast exposes no target-specific
memory axis.  Under a separable decoder its global contribution is identical
for every target and cannot realize arbitrary lookup.  A sufficiently
expressive non-separable decoder may succeed, but it must then construct an
implicit addressing mechanism by partitioning feature coordinates and
selecting them from the target representation.

Addressable VNs make this operation structural.  Physical multiplicity becomes
an effective factorization degree only when states can evolve differently and
be used separately; symmetric VN clones remain one channel.  Cross-attention
supplies both write and read routing, and separated dot-product codes address
\(M\) compartments in \(O(\log M)\) dimensions rather than through an exact
\(M\)-dimensional linear one-hot interface.  This does not reduce the total
information needed by Two-Radius, but it reduces the required width per
compartment and exposes the organization that a successful wide-VN decoder
would otherwise learn implicitly.

Addressability and multiplicity preservation remain complementary.  Standard
softmax slots recover target-specific labels but are invariant to uniform
replication.  A private anchor exposes the missing normalization mass; together
with normalized semantic content it reconstructs the counting measure and,
on bounded color domains, an injective 1-WL multiset representation.  The
Two-Radius results reflect this separation: standard slots solve label
retrieval but not counting, while anchored slots solve both.  Motif counting
and constrained link prediction provide additional evidence that the
mass-aware read improves coherent global decisions beyond duplicated
key--value pairs.

\paragraph{Limitations.}
Three scope decisions delimit our claims.  First, the capacity results are
exact-recovery, worst-case counting arguments
(Remark~\ref{rem:lower-scope}); approximate recovery and stochastic
decoders would require a rate--distortion treatment we do not provide.
Second, the positive results are constructive expressivity statements
(Remark~\ref{rem:existence}): they remove representation-level obstructions
but do not characterize the optimization dynamics that reach them, and our
experiments probe learnability only on controlled tasks.  Third, the
empirical evidence is deliberately narrow: the benchmarks are synthetic or
semi-synthetic, the link-set experiment is a single-seed proof of concept on
a specialized task, and we do not include the most direct ablation---a wide
VN of matched total width paired with a strong non-separable decoder (e.g.\
global FiLM)---which would quantify how much of the gain comes from exposing
the factorization architecturally rather than letting it be learned
implicitly.  We consider that ablation the most important next experiment.

The resulting module keeps edge-based message passing as the local processor.
It adds \(O(nM)\) node--memory attention pairs, with \(O(nMd)\)
arithmetic cost, rather than an \(O(n^2)\) real-node attention path; choosing
\(M=\Theta(\sqrt{|V|})\) already balances the global term against dense
attention up to width factors, though the useful number of slots, routing temperature,
and optimization scale remain task dependent.  The overall design principle
is consistent: a good virtual node should expose a compact addressable memory
axis, preserve multiplicity, and return global information without replacing
the graph's local inductive bias.

\bibliographystyle{unsrtnat}
\bibliography{references}

\appendix

\section{Capacity Proofs}
\label{app:capacity}

\subsection{Proof of Theorem~\ref{thm:lower}}

For each permutation \(\pi\in S_n\), let \(T_\pi\) be the complete
source-dependent transcript available to all targets.  If
\(\pi\neq\pi'\) but \(T_\pi=T_{\pi'}\), then fixed target identifiers and a
deterministic decoder produce the same joint output on both instances,
contradicting exact recovery.  Hence \(\pi\mapsto T_\pi\) is injective and the
transcript must realize at least \(n!\) values.

An effectively factorized transcript with \(L_{\mathrm{eff}}\) states,
\(d\) coordinates per state, and at most \(2^b\) values per coordinate has at
most
\[
2^{L_{\mathrm{eff}}db}
\]
possible values.  Therefore
\[
2^{L_{\mathrm{eff}}db}\geq n!,
\]
which proves Eq.~\eqref{eq:capacity}.  Stirling's approximation gives
\(\log_2(n!)=n\log_2n-O(n)\).  For several global layers,
\(L_{\mathrm{eff}}db\) is replaced by the capacity of their complete
source-dependent transcript.  Bounds for average rather than exact recovery
would require a rate--distortion argument (Remark~\ref{rem:lower-scope}).

\subsection{Why symmetric virtual nodes do not factorize memory}

\begin{lemma}[Persistence of virtual-node symmetry]
Consider \(M\) VNs with identical initial states, identical neighborhoods, and
shared permutation-equivariant update functions.  Their states remain identical
at every layer.
\end{lemma}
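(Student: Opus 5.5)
The plan is induction on the layer index \(\ell\). We show that after every layer the \(M\) virtual-node states satisfy \(z_1^{(\ell)}=\cdots=z_M^{(\ell)}\). We also show that every real-node state is unchanged when the VNs are permuted among themselves. The base case \(\ell=0\) holds by the assumption of identical initial states.

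Fix a layer and write \(\Phi\) for the shared update applied to virtual node \(m\). Its inputs are the VN's own state \(z_m^{(\ell)}\) and the multiset of states of its neighbors \(N(m)\). For edges between VNs, we either exclude them or treat them symmetrically; in the latter case each VN sees the other \(M-1\) VN states, all equal by hypothesis.

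The key step is to check that the argument tuple is literally the same for every \(m\):
\begin{itemize}
\item the own states coincide by the induction hypothesis;
\item the neighborhoods \(N(m)=N(m')\) coincide by assumption;
\item the real-node states in those neighborhoods are common objects, hence identical;
\item any VN-to-VN messages form the same multiset for each \(m\).
\end{itemize}
Since \(\Phi\) is one deterministic function shared across VNs, identical inputs give \(z_m^{(\ell+1)}=z_{m'}^{(\ell+1)}\).

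The real nodes are handled next. Each real node \(v\) aggregates messages from the \(M\) VNs through a permutation-invariant aggregator. Because the messages are identical, the result depends only on the common value and on \(M\). The real-node updates are therefore well defined. They are invariant under any relabeling of the VNs, which closes the induction for the next layer.

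The main obstacle is making precise what "shared permutation-equivariant update" rules out. If the update were allowed to depend on a VN index or positional code, the claim would fail. I would therefore state explicitly that the update is the same map for all VNs and is equivariant under permutations of the VN axis.

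The cleanest way to use that hypothesis is through the transposition \(\sigma=(m\,m')\). By induction, the joint state is fixed by \(\sigma\), since swapping two equal VN states changes nothing. By equivariance, the output of the layer is then also fixed by \(\sigma\), so the outputs at \(m\) and \(m'\) agree. This symmetry formulation covers attention-based or other global VN interactions as well, and not only neighborhood-aggregation updates. I would use it as the main argument and keep the explicit input-matching argument as an illustration.

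The conclusion is that the \(M\) clones form a single effective state, which gives \(L_{\mathrm{eff}}=1\) in Definition~\ref{def:effective-factorization}.
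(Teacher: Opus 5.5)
Your proof is correct, and its input-matching part (same own state by induction, same neighborhood, hence the same multiset of messages fed to one shared deterministic update) is exactly the paper's three-line induction. The transposition argument you promote to the main argument is a mildly more general route. It reduces the inductive step to one fact: a layer map \(F\) commuting with the swap \(\sigma=(m\,m')\) sends \(\sigma\)-fixed joint states to \(\sigma\)-fixed joint states. That makes the result immediate for VN--VN attention or any other global interaction among the clones, without first rewriting each update as own-state-plus-multiset. The price is that \(\sigma\)-equivariance of the whole layer becomes the thing to verify. That is where identical neighborhoods (in both edge directions, with matching edge features), weight sharing, and the absence of any per-VN index or embedding are actually used.

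Two small remarks. First, the clause about real-node states in your induction hypothesis is unnecessary: VNs \(m\) and \(m'\) read the same real-node states whatever those are, and in the equivariance formulation this is already built into \(F\). Second, the lemma shows that the \(M\) clones contribute one effective state. The full transcript's \(L_{\mathrm{eff}}\) is therefore this one state plus the constant structural contribution, not exactly \(1\).
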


\begin{proof}
The property holds at initialization.  If all VNs are identical at layer
\(\ell\), they receive identical neighbor multisets and aggregated messages.
The shared update therefore produces identical states at layer \(\ell+1\).
\end{proof}

Their joint state is always \((z,\ldots,z)\), so it has the same number of
reachable values as one VN and contributes one effective compartment.  The
factor \(M\) requires a mechanism, such as addressability, that distinguishes
writes, states, and reads.  Independently initialized clones escape the exact
symmetry of the lemma but provide no selection interface; see
Remark~\ref{rem:operational} for the operational reading of this case.

\subsection{Proof of Theorem~\ref{thm:upper}}

Let \(k=\lceil n/M\rceil\) and fix a public partition
\(I_1,\ldots,I_M\) with \(|I_m|\leq k\).  Let \(g(i)\) and \(r(i)\) denote
the group and within-group position of identifier \(i\).

A label in \([n]\) uses
\(q=\lceil\log_2 n/b\rceil\) coordinates of \(b\) bits.  Slot \(m\) stores
its \(\lceil\log_2 M/b\rceil\)-coordinate address and \(k\) label blocks of
length \(q\).  Source \(i\) writes the code of \(\pi(i)\) into block \(r(i)\)
of slot \(g(i)\).  Since each destination depends only on the identifier,
source order is irrelevant.  Target \(j\) selects slot \(g(j)\) and block
\(r(j)\), recovering \(\pi(j)\).  Permuting target order permutes the reads,
which proves equivariance and Eq.~\eqref{eq:upper}.

For labels independently drawn from an alphabet of size \(C\), the analogous
capacity requirement is
\(L_{\mathrm{eff}}db\geq n\log_2 C\), and the construction uses
\(O((n/M)\log C/b)\) label coordinates per slot.

\section{Additional Results on Anchored Attention}
\label{app:anchor}

\subsection{General replication-blind architecture}

Consider any deterministic network in which all source information reaches the targets through global reads $A_m$ satisfying
\[
A_m(q,rX)=A_m(q,X),
\]
and suppose no parallel operation receives source cardinality.  By induction over global and local layers, the complete target output is identical on $X$ and $rX$: the first global states coincide, deterministic subsequent states coincide, and repeated source copies remain indistinguishable under shared updates.  Therefore no such network can solve a task with different targets on the paired instances.

The assumption excludes several legitimate count-aware mechanisms: sum aggregation, explicit degree or graph-size features, an anchor inside the normalization, batch statistics over the node axis, and unmasked padding information.  These must be controlled in the synthetic benchmark, and they motivate the protocol choices of Appendix~\ref{app:protocol}: graph size is not an input feature, padding cardinality is masked out, and the write attention set contains only replicated source tokens.

\subsection{Anchored semantic/mass decomposition}

Write the anchored attention weights as
\[
\alpha_0=\frac{e^{s_0}}{e^{s_0}+Z},
\qquad
\alpha_i=\frac{e^{s_i}}{e^{s_0}+Z}.
\]
Conditioned on selecting a real node, the semantic weights are
\[
\widetilde{\alpha}_i=
\frac{\alpha_i}{1-\alpha_0}
=\frac{e^{s_i}}{Z}.
\]
Hence the semantic channel
\[
\sum_i\widetilde{\alpha}_iv_i
\]
is exactly standard normalized attention, while $\alpha_0$ separately carries mass.  This decomposition is useful experimentally because count-dependent variation cannot contaminate label content before the count head.

\section{Detailed Experimental Protocol}
\label{app:protocol}

\subsection{Two-Radius data and batching}

Table~\ref{tab:counted} uses \(n=12\) identifiers, \(12\) labels, one
structural central node, maximum base multiplicity \(A=4\), and
replication scales \(r\in\{1,2,3\}\). For every base example, the labels
form a uniformly sampled permutation \(\pi\in S_{12}\), and the base
multiplicities are sampled independently as
\[
a_i\sim\operatorname{Unif}\{1,2,3,4\}.
\]
The three paired graphs use counts \(c_i=ra_i\), so target counts lie in
\(\{1,\ldots,12\}\). Source order is independently shuffled in every rendered
graph, and target identifiers are placed in a random order.

The directed communication graph contains edges from every source to the
central node and from the central node to every target, with mean-normalized
aggregation and no reverse edges. This prevents the fixed target set from
entering the source summary through the local path. The largest graph contains
\(144\) sources, one central node, and \(12\) targets. Smaller graphs are padded
to these \(157\) positions; node and source masks remove padding from message
passing, pooling, and attention.

Data are generated online rather than stored in a finite train/validation
split. Each training minibatch samples \(32\) base assignments and expands
each one at all three scales, for an effective batch of \(96\) rendered graphs.
Training therefore processes \(320{,}000\) base assignments, or
\(960{,}000\) rendered graphs, over \(10{,}000\) updates. Validation at each
epoch uses \(8\) independently generated minibatches, corresponding to
\(256\) base assignments and \(768\) rendered graphs.

\subsection{Model variants}

All variants use \(128\)-dimensional identifier, label, and role embeddings,
followed by a two-layer input MLP, LayerNorm, and three residual
mean-aggregation MPNN layers. The label and count predictors are two-layer
MLPs. The count head has \(12\) classes, with class \(c_i-1\) representing
count \(c_i\).

The \textsc{MPNN} baseline has no auxiliary global state. The \textsc{VN}
variant reads the sources by masked mean pooling, updates one learned virtual
state with a three-layer MLP, and broadcasts it homogeneously to the targets
through a residual target MLP with scale \(0.2\).

Both cross-attention variants use \(M=12\) learned static slot addresses and
\(4\) heads. Identifier embeddings are projected into a shared address space
used both for source-to-slot writes and target-to-slot reads; dynamic slot
content is not reused as an address. The write attends only to source tokens,
and both routing directions use temperature \(0.35\). Slot contents pass
through an output projection, LayerNorm, a residual feed-forward block, and a
second LayerNorm. The retrieved target context is integrated by Slot-FiLM,
with
\[
\gamma=0.5\tanh(\widehat\gamma),
\qquad
\beta=\tanh(\widehat\beta),
\]
and the final FiLM linear map is initialized from
\(\mathcal N(0,10^{-6})\), i.e.\ with weight standard deviation \(10^{-3}\).

The \textsc{Anchored Cross-Attn VN} additionally appends one private anchor
logit to each slot and head before the source softmax. Anchor logits are
initialized to zero. The per-head log-odds
\[
\log(1-\alpha_0)-\log\alpha_0
\]
are passed through a two-layer mass MLP and added to the slot content after
content normalization, so LayerNorm cannot erase the mass signal.

\subsection{Paired replication protocol}

For a fixed base assignment \((\pi,a)\), the three rendered graphs differ
only through the common scale \(r\). Copies associated with the same
identifier have identical identifier, label, role, neighborhood, and address
features. Graph cardinality is never supplied as an input feature. Moreover,
the source-to-slot attention set contains only source tokens: central and
target nodes are excluded because their fixed, non-replicated presence would
act as an implicit softmax anchor. Consequently, a normalized source read is
exactly invariant across the paired scales, whereas the correct count labels
change.

\subsection{Optimization and model selection}

All models are trained with AdamW for \(200\) epochs and \(50\) minibatches per
epoch. The learning rate is \(10^{-4}\) for embeddings, the local backbone,
and prediction heads. Parameters belonging to the global block use a multiplier
of \(2\), giving learning rate \(2\times10^{-4}\). Weight decay and dropout
are zero, and gradients are clipped to norm \(5\).

The loss is
\[
\mathcal L
=
\mathcal L_{\mathrm{label}}
+
w_{\mathrm{count}}\mathcal L_{\mathrm{count}}.
\]
We use \(w_{\mathrm{count}}=0\) through epoch \(30\), increase it linearly to
\(0.5\) over epochs \(31\)--\(70\), and keep it at \(0.5\) thereafter. All
reported Table~\ref{tab:counted} rows use seed \(0\). At every epoch, metrics
are averaged over \(8\) freshly sampled validation minibatches, and the
reported row is the epoch maximizing validation \textsc{Both}. Because the
experiment contains one seed, Table~\ref{tab:counted} reports point estimates
rather than means and standard deviations.

\subsection{Diagnostics}

For a single base assignment rendered at all three scales, we compare the
complete global states against the \(r=1\) state. A normalized Cross-Attn VN
should have zero maximum and mean difference up to floating-point error,
whereas the anchored state should vary with \(r\). We additionally record
source-to-slot weights, anchor weights, target-to-slot weights, count accuracy
at each scale, and exact graph recovery.

\subsection{Reproducibility}

An executable notebook containing the data generator, the four model definitions, the training and model-selection loops, and the replication-state diagnostic used by this protocol will be released upon publication. Notebook output cells are
not part of the protocol specification; all settings stated above are taken
from the executable configuration and model code.

\section{Planted Motif Census}
\label{app:motifs}

Each graph contains a random connected background and several planted rooted gadgets.  The root marker is shared across motif types, so the MPNN must infer the local structure rather than read a motif label.  The target vector is
\[
[c_{\triangle},c_{\square},c_{\mathrm{star}},
c_{\mathrm{path}},c_{\mathrm{clique}},c_{\mathrm{diamond}}].
\]
Distractor gadget families are marked but should not be counted.  OOD splits independently enlarge motif counts, background size, and change motif mixtures.  Exact replication is retained as a mechanistic unit test, while jittered replication applies degree-preserving rewiring and node-type perturbations.

Compared readouts are mean, sum, standard slots, and anchored slots.  Graph
size is not provided as an input feature; sum pooling is therefore the only
count-aware control, and the gap between sum pooling and anchored slots
isolates the benefit of query-dependent soft masses over a single additive
channel.

\end{document}